\newcommand{\bl}[1]{\textcolor{blue}{#1}}
\newcommand{\red}[1]{\textcolor{red}{#1}}
\definecolor{mypurple}{rgb}{.4,.0,.5}
\def\y{{\bf y}}
\def\x{{\bf x}}
\def\x{{\mathbf x}}
\def\u{{\bf u}}
\def\x{{\bf x}}
\def\y{{\bf y}}
\def\q{{\bf q}}
\def\m{{\bf m}}
\def\b{{\bf b}}
\def\c{{\bf c}}
\def\h{{\bf h}}
\def\cH{{\mathcal H}}
\def\be{\begin{equation}}
\def\ee{\end{equation}}
\def\ba{\left[\begin{array}}
\def\ea{\end{array}\right]}
\def\u{{\bf u}}
\def\x{{\bf x}}
\def\y{{\bf y}}
\def\q{{\bf q}}
\def\b{{\bf b}}
\def\c{{\bf c}}
\def\p{{\bf p}}
\def\1{{\bf 1}}
\def\0{{\bf 0}}
\def\erfc{\mbox{erfc}}
\def\calX{{\cal X}}
\def\calY{{\cal Y}}
\def\mR{{\mathbb R}}
\def\mN{{\mathbb N}}
\def\mE{{\mathbb E}}
\def\mS{{\mathbb S}}
\def\mP{{\mathbb P}}
\def\lp{\left (}
\def\rp{\right )}
\def\y{{\bf y}}
\def\x{{\bf x}}
\def\x{{\mathbf x}}
\def\u{{\bf u}}
\def\x{{\bf x}}
\def\y{{\bf y}}
\def\q{{\bf q}}
\def\b{{\bf b}}
\def\c{{\bf c}}
\def\h{{\bf h}}
\def\cH{{\cal H}}
\def\be{\begin{equation}}
\def\ee{\end{equation}}
\def\ba{\left[\begin{array}}
\def\ea{\end{array}\right]}
\def\u{{\bf u}}
\def\x{{\bf x}}
\def\y{{\bf y}}
\def\q{{\bf q}}
\def\b{{\bf b}}
\def\c{{\bf c}}
\def\p{{\bf p}}
\def\({\left (}
\def\){\right )}
\def\1{{\bf 1}}
\def\m{{\bf m}}
\def\q{{\bf q}}
\def\0{{\bf 0}}
\def\cX{{\mathcal X}}
\def\cY{{\mathcal Y}}
\definecolor{darkgreen}{rgb}{0, 0.4,0}
\definecolor{purplebrown}{rgb}{0.5,0.1,0.6}
\definecolor{ultclupcol}{rgb}{0.1,0.5,0.5}
\definecolor{mytrycolor}{rgb}{0.5,0.7,0.2}
\definecolor{ultclupcola}{rgb}{.5,0,.5}
\definecolor{shadebrown}{rgb}{0.1,0.1,0.9}
\definecolor{lightblue}{rgb}{0.2,0,1}
\newtcbox{\xmybox}{on line,
arc=7pt,
before upper={\rule[-3pt]{0pt}{10pt}},boxrule=0pt,
boxsep=0pt,left=6pt,right=6pt,top=0pt,bottom=0pt,enhanced, coltext=blue, colback=white!10!yellow}
\newtcbox{\xmyboxa}{on line,
arc=7pt,
before upper={\rule[-3pt]{0pt}{10pt}},boxrule=0pt,
boxsep=0pt,left=6pt,right=6pt,top=0pt,bottom=0pt,enhanced, colback=white!10!yellow}
\newtcbox{\xmyboxb}{on line,
arc=7pt,
before upper={\rule[-3pt]{0pt}{10pt}},boxrule=1pt,colframe=darkgreen!100!blue,
boxsep=0pt,left=6pt,right=6pt,top=0pt,bottom=0pt,enhanced, colback=white!10!yellow}
\newtcbox{\xmyboxc}{on line,
arc=7pt,
before upper={\rule[-3pt]{0pt}{10pt}},boxrule=.7pt,colframe=blue!100!blue,
boxsep=0pt,left=6pt,right=6pt,top=0pt,bottom=0pt,enhanced, coltext=blue, colback=white!10!yellow}
\newtcbox{\xmytboxa}{on line,
arc=7pt,
before upper={\rule[-3pt]{0pt}{10pt}},boxrule=.0pt,colframe=pink!50!yellow,
boxsep=0pt,left=6pt,right=6pt,top=0pt,bottom=0pt,enhanced, coltext=white, colback=blue!40!red}
\newtcbox{\xmytboxb}{on line,
arc=7pt,
before upper={\rule[-3pt]{0pt}{10pt}},boxrule=.0pt,colframe=pink!50!yellow,
boxsep=0pt,left=6pt,right=6pt,top=0pt,bottom=0pt,enhanced, coltext=white, colback=white!40!green}
\newcommand\subsubsubsection{\@startsection{paragraph}{4}{\z@}{-2.5ex\@plus -1ex \@minus -.25ex}{1.25ex \@plus .25ex}{\normalfont\normalsize\bfseries}}
\newcommand\subsubsubsubsection{\@startsection{subparagraph}{5}{\z@}{-2.5ex\@plus -1ex \@minus -.25ex}{1.25ex \@plus .25ex}{\normalfont\normalsize\bfseries}}
\newtheorem{theorem}{Theorem}
\newtheorem{corollary}{Corollary}
\begin{document}

\begin{singlespace}

\title {Fl RDT based ultimate lowering of the negative spherical perceptron capacity 
}
\author{
\textsc{Mihailo Stojnic
\footnote{e-mail: {\tt flatoyer@gmail.com}} }}
\date{}
\maketitle

\centerline{{\bf Abstract}} \vspace*{0.1in}

We consider the classical \emph{spherical} perceptrons and study their capacities. The famous zero-threshold case was solved in the sixties of the last century (see, \cite{Wendel62,Winder,Cover65}) through the high-dimensional combinatorial considerations. The general threshold, $\kappa$, case though turned out to be much harder and stayed out of reach for the following several decades. A substantial progress was then  made in \cite{SchTir02} and \cite{StojnicGardGen13} where the \emph{positive} threshold ($\kappa\geq 0$) scenario was finally fully settled. While the negative counterpart ($\kappa\leq 0$) remained out of reach, \cite{StojnicGardGen13} did show that the random duality theory (RDT) is still powerful enough to provide excellent upper bounds. Moreover, in \cite{StojnicGardSphNeg13}, a \emph{partially lifted} RDT variant  was considered and it was shown that the upper bounds of \cite{StojnicGardGen13} can be lowered. After recent breakthroughs in studying bilinearly indexed (bli) random processes in \cite{Stojnicsflgscompyx23,Stojnicnflgscompyx23}, \emph{fully lifted} random duality theory (fl RDT) was developed in \cite{Stojnicflrdt23}. We here first show that the \emph{negative spherical perceptrons} can be fitted into the frame of the fl RDT and then employ the whole fl RDT machinery to characterize the capacity. To be fully practically operational, the fl RDT requires a substantial numerical work. We, however, uncover remarkable closed form analytical relations among key lifting parameters. Such a discovery enables both shedding a new light on the parametric interconnections within the lifting structure and performing the needed numerical calculations to obtain concrete capacity values. After doing all of that, we also observe that an excellent convergence (with the relative improvement $\sim 0.1\%$) is achieved already on the third (second non-trivial) level of the \emph{stationarized} full lifting.

\vspace*{0.25in} \noindent {\bf Index Terms: Negative spherical perceptrons; Fully lifted random duality theory}.

\end{singlespace}

\section{Introduction}
\label{sec:back}

The last two decades have seen a remarkable progress in studying various aspects of neural networks (NN) and machine learning (ML). Development of powerful algorithmic techniques and corresponding performance characterizing  analytical tools  together with persistent widening of the range of potential applications are only a couple of the most important ones. We, here, follow into similar footsteps and continue the analytical progress through a theoretical studying of  \emph{perceptrons} as key NN/ML building blocks.

We are particularly interested in the so-called \emph{spherical} perceptrons which are easily the most popular and quite likely the simplest of all perceptron variants. Despite the simplicity, their full analytical characterizations in many important scenarios are not easy to obtain. For example, one of their most relevant features, the storage or classifying \emph{capacity}, is, in general, very difficult to compute. Moreover, designing practical algorithms that can confirm the capacity achievability is often even harder. Some special cases are a bit easier though and relevant results can be found throughout the literature. For example, the so-called \emph{zero-threshold} capacity was determined through a combinatorial, high-dimensional geometry based, approach in seminal works \cite{Wendel62,Winder,Cover65} (for relevant geometric followup extensions related to  polytopal  neighborliness see, e.g., \cite{Stojnicl1BnBxasymldp,Stojnicl1BnBxfinn}).

While the results of \cite{Wendel62,Winder,Cover65} established a monumental breakthrough at the time of their appearance, they remained an isolated example of extraordinary success for the better part of the following several decades. For example, even the simplest possible extension to general positive thresholds turned out to be a formidable challenge. As it became apparent that the mathematically rigorous treatments might be a bit further away than initially predicated, the emergence of the statistical physics \emph{replica tools} in the seventies of the last century provided a glimmer of hope that at least some (not necessarily mathematically rigorous) analytical characterizations can be obtained. Not long after, in the second half of the eighties, the Gardner's seminal work, \cite{Gar88} appeared and paved the way for many of the very best perceptrons' analytical results. Namely, \cite{Gar88} and a follow-up \cite{GarDer88}, utilized the replica theory and established a generic framework that can be used for the analytical characterizations of, basically, all relevant features of interest in  various perceptrons models. Among others, these certainly included the storage capacities in a host of different scenarios: positive/negative thresholds, correlated/uncorrelated patterns, patterns stored incorrectly and many others. The predictions obtained in \cite{Gar88,GarDer88} were later on (in identical or similar statistical contexts) established as mathematically fully rigorous (see, e.g., \cite{SchTir02,SchTir03,Tal05,Talbook11a,Talbook11b,StojnicGardGen13,StojnicGardSphNeg13,StojnicGardSphErr13}). In particular, \cite{SchTir02,SchTir03} proved the predictions of \cite{Gar88} related to the storage capacity and the volume of the bond strengths that satisfies the dynamics of the \emph{positive} spherical perceptrons (i.e., the perceptrons with spherical constraints and positive thresholds $\kappa\geq 0$). Talagrand, in \cite{Tal05,Talbook11a,Talbook11b}, reconfirmed these predictions through a related but somewhat different approach. On the other hand, \cite{StojnicGardGen13} designed a completely different, random duality theory (RDT) based,  framework  and again confirmed almost all of the predictions from \cite{Gar88}, including many previously not considered in \cite{SchTir02,SchTir03,Tal05,Talbook11a,Talbook11b}.  A substantial help in all of these, mathematically rigorous, treatments, was provided by the underlying \emph{convexity}.

\subsection{\emph{Negative} spherical perceptron (NSP) --- no convexity help}
\label{sec:devsddconv}

As recognized in  \cite{StojnicGardGen13,StojnicGardSphNeg13}, the above mentioned convexity help disappears when the spherical perceptrons have a negative threshold (i.e., when $\kappa<0$). The underlying deterministic strong duality is not present anymore and obtaining accurate capacity characterizations becomes notoriously hard. Still, the power of the RDT remains useful. In particular, relying on the fundamental principles of the RDT,  \cite{StojnicGardSphNeg13} proved the Talagrand's conjecture from \cite{Tal05,Talbook11a,Talbook11b} that the capacity predictions of \cite{Gar88} are, at the very least, rigorous upper bounds even when $\kappa<0$. \cite{StojnicGardSphNeg13} went a step further, utilized a \emph{partially lifted} RDT variant and established that, these rigorous bounds can in fact be lowered. This effectively confirmed that the replica symmetry (assumed in \cite{Gar88,GarDer88}) must be broken.

 A series of works based on statistical physics replica approaches then followed (see, e.g., \cite{FPSUZ17,FraPar16,FraSclUrb19,FraSclUrb20,AlaSel20}). \cite{FraPar16} was the first one where the NSP was connected to the recently studied jamming phenomena and hard spheres packing problems. It established a preliminary version of the phase diagram and emphasized the relevance of the distribution laws of ``\emph{gaps}'' and ``\emph{forces}'' and computed their critical exponents. \cite{FPSUZ17} then provided a more complete phase diagram characterization with all predicated types of transitioning in both the so-called SAT and UNSAT phases. Moreover, it hypothesized a potential universality in gaps and forces distribution laws exponents.  \cite{FraSclUrb19} studied similar features in the linear cost NSP variant. Again, the critical exponents of the distribution laws were found to match the ones associated with the jamming of the hard spheres. The corresponding algorithmic confirmations were obtained in  \cite{FraSclUrb20}. Algorithmic considerations of a different type were discussed in \cite{AlaSel20}. Relying on the (access to the) Parisi replica symmetry breaking (rsb) variational functional, an iterative message-passing type of procedure is suggested as an algorithmic way of achieving the capacity.

On the rigorous front though, the results of \cite{StojnicGardSphNeg13} remained untouchable until now. Moreover, \cite{ZhouXiMon21} showed that the upper bounds of \cite{StojnicGardSphNeg13} are actually (up to the leading order terms) tight in $\kappa\rightarrow -\infty$ regime. As mentioned above, \cite{Gar88} also studied many other perceptron properties. It, for example, gave the replica symmetry prediction for the capacities of spherical perceptrons when functioning as erroneous storage memories.  \cite{StojnicGardSphErr13} showed that these predictions of \cite{Gar88} are again rigorous upper bounds  which in certain range of system parameters can be lowered. This proved that, in the erroneous scenarios, the replica symmetry (assumed in \cite{Gar88,GarDer88}) must again be broken.

While our primary interest here is in the simplest and possibly most famous spherical perceptrons, various other perceptron variants are of interest. Moreover, many of them that belong to the class of analytically ``\emph{hard}'' perceptrons have been intensively studied over the last several decades as well. We here single out probably the most well known  \emph{discrete} $\pm 1$ perceptrons. Their \emph{symmetric} realizations are analytically  a bit easier than other variants and the corresponding full  capacity characterizations are known to have very particular relatively simple formulations (see, e.g., \cite{AbbLiSly21b,PerkXu21} as well as \cite{AbbLiSly21a,AlwLiuSaw21,AubPerZde19,GamKizPerXu22}). On the other hand, an initial replica symmetry based treatment of the original \emph{nonsymmetric} ones was already given in the foundational papers  \cite{Gar88,GarDer88}, where the underlying hardness is properly recognized. After the rsb based results were obtained in \cite{KraMez89}, a strong mathematical progress followed first in \cite{DingSun19,NakSun23,BoltNakSunXu22,Tal99a,StojnicDiscPercp13} and then ultimately in \cite{Stojnicbinperflrdt23} as well.

We here follow the path of \cite{Stojnicbinperflrdt23} and utilize the connection between the so-called \emph{random feasibility problems} (rfps) (and the spherical perceptrons as their particular instances) on the one side and the \emph{random duality theory} (RDT) (see, e.g., \cite{StojnicCSetam09,StojnicICASSP10var,StojnicRegRndDlt10,StojnicGardGen13,StojnicICASSP09}) concepts on the other side. We first recognize the connection between the rfps and\emph{bilinearly indexed} (bli) random processes and then utilize a strong recent progress in studying these processes in \cite{Stojnicsflgscompyx23,Stojnicnflgscompyx23}. Namely, relying on \cite{Stojnicsflgscompyx23,Stojnicnflgscompyx23}, in \cite{Stojnicflrdt23} a \emph{fully lifted} random duality theory (fl RDT) was established. Utilizing further the fl RDT and its a particular \emph{stationarized} fl RDT variant (called sfl RDT), we then obtain desired capacity characterizations. As is usually the case, to have the fl RDT become practically operational, underlying numerical evaluations need to be conducted. Doing so is a problem on its own and often requires a rather strong effort. Here, however, we discover remarkable closed form relations between key lifting parameters. These provide a direct view into a rather beautiful structuring of the intrinsic parametric interconnections and ultimately substantially facilitate the underlying numerical work. Moreover, they eventually enable us to uncover that the obtained capacity characterizations, already on the \emph{\textbf{third}} level of the \emph{full} lifting (3-sfl RDT), exhibit an extraordinarily rapid convergence with a relative improvement  $\sim 0.1\%$ for all considered thresholds $\kappa$.

\section{Connecting NSPs to rfps and free energies}
 \label{sec:bprfps}

As suggested above, we will rely on the fact that studying the NSP properties is tightly connected to studying the properties of feasibility problems. Moreover, studying feasibility problems is then tightly connected to studying statistical physics objects called \emph{free energies}. Both of these connections were recognized and utilized in a long line of work \cite{StojnicGardGen13,StojnicGardSphErr13,StojnicGardSphNeg13,StojnicDiscPercp13,Stojnicbinperflrdt23}. To capitalize on the existing results and to make the exposition of the main ideas needed here as smooth as possible, we find it convenient to carefully parallel the presentations from these papers. Along the same lines, to avoid an unnecessary repetition of the already introduced concepts, we adopt the practice to briefly recall on them and then focus on highlighting the main differences, novelties, and other particularities related to the problems of out interest here.

\subsection{NSP $\longleftrightarrow$ rfps connection}
 \label{sec:bprfps}

As is well known, the \emph{feasibility} problems with linear inequalities have the following mathematical form
\begin{eqnarray}
\hspace{-1.5in}\mbox{Feasibility problem $\mathbf{\mathcal F}(G,\b,\cX,\alpha)$:} \hspace{1in}\mbox{find} & & \x\nonumber \\
\mbox{subject to}
& & G\x\geq \b \nonumber \\
& & \x\in\cX. \label{eq:ex1}
\end{eqnarray}
In (\ref{eq:ex1}), $G\in\mR^{n\times n}$, $\b\in\mR^{m\times 1}$, $\cX\in\mR^n$, and $\alpha=\frac{m}{n}$.  \cite{StojnicGardGen13,StojnicGardSphErr13,StojnicGardSphNeg13,StojnicDiscPercp13,Stojnicbinperflrdt23} recognized that the above formulation is directly related to both the main principles of the random duality theory (RDT) and the mathematical description of perceptrons. The perceptron's types, however, can be different and are determined based on matrix $G$, vector $\b$, and set $\cX$. For example, for
$\cX=\{-\frac{1}{\sqrt{n}},\frac{1}{\sqrt{n}} \}^n$ (i.e., for $\cX$ being the corners of the $n$-dimensional unit norm hypercube), one has the so-called binary $\pm 1$ perceptrons, whereas for $\cX=\mS^n$ (i.e., for $\cX$ being the $n$-dimensional unit sphere $\mS^n$), one has the so-called spherical perceptrons. Both of these perceptron variants allow for generic (variable) values of the components of the threshold vector $\b$. When $\b$ is a multiple of $\1$ (column vector of all ones of appropriate dimension), i.e., when $\b=\kappa\1$ (where $\kappa\in\mR$), one further obtains perceptrons with \emph{fixed} thresholds $\kappa$. In particular, for $\cX=\mS^n$ and $\kappa<0$ one obtains that (\ref{eq:ex1}) effectively emulates the so-called \emph{negative spherical} perceptron (NSP). Moreover, if $G$ is generic and deterministic, we have a deterministic perceptron. Correspondingly, if $G$ is random, we have a statistical one. Our main objects of interest in this paper are the random NSPs and, in particular, the Gaussian NSPs, where the components of $G$ are iid standard normal random variables. Taking $G$ to be comprised of the iid standard normal components makes the presentation neater. However, all the key results that we obtain are adaptable so that they relate to other random NSP variants  where the randomness can come from basically any other distribution that can be pushed through the Lindenberg variant of the central limit theorem.

It is easy to see that the feasibility problem from (\ref{eq:ex1}) can be rewritten as the following optimization problem
\begin{eqnarray}
\min_{\x} & & f(\x) \nonumber \\
\mbox{subject to}
& & G\x\geq \b \nonumber \\
& & \x\in\cX, \label{eq:ex1a1}
\end{eqnarray}
where an artificial function $f(\x):\mR^n \rightarrow\mR$ is introduced. As is also well known, for any optimization problem to be solvable, the necessary precondition is that it is actually feasible. Assuming the feasibility, (\ref{eq:ex1a1})  can then be rewritten as
\begin{eqnarray}
\xi_{feas}^{(0)}(f,\cX) = \min_{\x\in\cX} \max_{\y\in\cY_+}  \lp f(\x) -\y^T G\x +\y^T  \b  \rp,
 \label{eq:ex2}
\end{eqnarray}
where $\cY_+$ is basically a set that collects all $\y$ such that $\y_{i}\geq 0,1\leq i\leq m$. Since $f(\x)=0$ is clearly an artificial object, one can also specialize back to $f(\x)=0$ and find
\begin{eqnarray}
\xi_{feas}^{(0)}(0,\cX) = \min_{\x\in\cX} \max_{\y\in\cY_+}  \lp -\y^T G\x +\y^T  \b  \rp.
 \label{eq:ex2a1}
\end{eqnarray}
The main point behind perceptron's functioning and its connection to rfps is contained precisely in (\ref{eq:ex2a1}). To see this, one starts by observing that existence of an $\x$ such that $G\x\geq \b$, i.e., such that (\ref{eq:ex1}) is feasible, ensures that the inner maximization in (\ref{eq:ex2a1}) can do no better than make $\xi_{feas}^{(0)}(0,\cX) =0$. On the other hand, if such an $\x$ does not exist, then at least one of the inequalities in $G\x\geq \b$ is not satisfied and the inner maximization trivially makes $\xi_{feas}^{(0)}(0,\cX) =\infty$. It is also easy to see that, from the feasibility point of view, $\xi_{feas}^{(0)}(0,\cX) =\infty$ and $\xi_{feas}^{(0)}(0,\cX) >0$ are equivalent which implies that, for all practical feasibility purposes, the underlying optimization problem in (\ref{eq:ex2a1}) is structurally insensitive with respect to $\y$ scaling. One can then restrict to $\|\y\|_2=1$  and basically ensure that $\xi_{feas}^{(0)}(0,\cX)$ remains bounded. It is then straightforward to see from (\ref{eq:ex2a1}), that determining
\begin{eqnarray}
\xi_{feas}(0,\cX)
& =  &
\min_{\x\in\cX} \max_{\y\in\cY_+,\|\y\|_2=1}   \lp -\y^TG\x +\y^T\b \rp
                   =
\min_{\x\in\mS^n} \max_{\y\in\mS_+^m}  \lp -\y^TG\x + \kappa \y^T\1 \rp,
 \label{eq:ex3}
\end{eqnarray}
with $\mS_+^m$ being the positive orthant part of the $m$-dimensional unit sphere,
is critically important for the analytical characterization of the rfps from (\ref{eq:ex1}). One then has that the sign of the objective value in (\ref{eq:ex3}) (i.e., of $\xi_{feas}(f,\cX)$) determines the feasibility of (\ref{eq:ex1}). In more concrete terms, (\ref{eq:ex1}) is infeasible if  $\xi_{feas}(f,\cX)>0$ and feasible if   $\xi_{feas}(f,\cX)\leq 0$.

The above reasoning holds generically, i.e., for any $G$ and $\b$. It then automatically applies to the Gaussian NSPs as particular instances of the above formalism obtained for Gaussian $G$ and $\b=\kappa \1,\kappa<0$. Given that the connection between the rfps from (\ref{eq:ex1}) and the corresponding random \emph{optimization} problem counterpart from  (\ref{eq:ex3}) is rather evident, one clearly observes the critically important role of (\ref{eq:ex3}) in characterizing various perceptrons' features. The feature of our particular interest here is the storage/classifying  \emph{capacity}. In a large dimensional statistical context, it is defined as follows
 \begin{eqnarray}
\alpha & = &    \lim_{n\rightarrow \infty} \frac{m}{n}  \nonumber \\
\alpha_c(\kappa) & \triangleq & \max \{\alpha |\hspace{.08in}  \lim_{n\rightarrow\infty}\mP_G\lp\xi_{perc}(0,\cX)\triangleq \xi_{feas}(0,\cX)>0\rp\longrightarrow 1\} \nonumber \\
& = & \max \{\alpha |\hspace{.08in}  \lim_{n\rightarrow\infty}\mP_G\lp{\mathcal F}(G,\b,\cX,\alpha) \hspace{.07in}\mbox{is feasible} \rp\longrightarrow 1\}.
  \label{eq:ex4}
\end{eqnarray}
The above is the so-called \emph{statistical} capacity. The corresponding deterministic  variant is defined in exactly the same way with $\mP_G$ being removed. Throughout the paper, the subscripts next to $\mP$ and $\mE$ denote the randomness with respect to which the statistical evaluation is taken. On occasion, when this is clear from the contexts, these subscripts are left unspecified. Moreover, to shorten writing, we regularly use the term capacity instead of \emph{statistical} capacity.

\subsection{Rfps $\longleftrightarrow$ (partially reciprocal) \emph{free energy} connection}
\label{secrfpsfe}

In the previous section, we have established that studying the random feasibility problems (rfps) is critically important for the NSP's capacity analytical characterization. In this section we extend this connection to studying \emph{free energies}. These object are well known and almost unavoidable in many statistical physics consideration. To introduce them in a mathematically proper way that would be of use here, we start by defining the following, so-called, \emph{bilinear Hamiltonian}
\begin{equation}
\cH_{sq}(G)= \y^TG\x,\label{eq:ham1}
\end{equation}
and its corresponding (so to say, \emph{partially reciprocal}) partition function
\begin{equation}
Z_{sq}(\beta,G)=\sum_{\x\in\cX} \lp \sum_{\y\in\cY}e^{\beta\cH_{sq}(G)}\rp^{-1}.  \label{eq:partfun}
\end{equation}
 To ensure an overall generality of the exposition, we, in (\ref{eq:partfun}), take $\cX$ and $\cY$ as general sets (fairly soon, we make specializations, $\cX=\mS^n$ and $\cY=\mS_+^m$,  necessary for perceptrons' consideration of our interest here). One quickly notes, the reciprocal nature of the inner summation, which makes the partition function given in  (\ref{eq:partfun}) somewhat different from the counterparts typically seen in statistical physics literature. The corresponding  thermodynamic limit of the average ``\emph{partially reciprocal}'' free energy is then given as
\begin{eqnarray}
f_{sq}(\beta) & = & \lim_{n\rightarrow\infty}\frac{\mE_G\log{(Z_{sq}(\beta,G)})}{\beta \sqrt{n}}
=\lim_{n\rightarrow\infty} \frac{\mE_G\log\lp \sum_{\x\in\cX} \lp \sum_{\y\in\cY}e^{\beta\cH_{sq}(G)}\rp^{-1}\rp}{\beta \sqrt{n}} \nonumber \\
& = &\lim_{n\rightarrow\infty} \frac{\mE_G\log\lp \sum_{\x\in\cX} \lp \sum_{\y\in\cY}e^{\beta\y^TG\x)}\rp^{-1}\rp}{\beta \sqrt{n}}.\label{eq:logpartfunsqrt}
\end{eqnarray}
The ground state special case is obtained by considering the so-called ``zero-temperature'' ($T\rightarrow 0$ or  $\beta=\frac{1}{T}\rightarrow\infty$) regime
\begin{eqnarray}
f_{sq}(\infty)   \triangleq    \lim_{\beta\rightarrow\infty}f_{sq}(\beta) & = &
\lim_{\beta,n\rightarrow\infty}\frac{\mE_G\log{(Z_{sq}(\beta,G)})}{\beta \sqrt{n}}
=
 \lim_{n\rightarrow\infty}\frac{\mE_G \max_{\x\in\cX}  -  \max_{\y\in\cY} \y^TG\x}{\sqrt{n}} \nonumber \\
& = & - \lim_{n\rightarrow\infty}\frac{\mE_G \min_{\x\in\cX}  \max_{\y\in\cY} \y^TG\x}{\sqrt{n}}.
  \label{eq:limlogpartfunsqrta0}
\end{eqnarray}
Restricting to $G$'s comprised of iid standard normals allows to utilize their sign symmetry and rewrite the above as
\begin{eqnarray}
-f_{sq}(\infty)
& = &  \lim_{n\rightarrow\infty}\frac{\mE_G \min_{\x\in\cX}  \max_{\y\in\cY} \y^TG\x}{\sqrt{n}}  = \lim_{n\rightarrow\infty}\frac{\mE_G \min_{\x\in\cX}  \max_{\y\in\cY} -\y^TG\x}{\sqrt{n}}.
  \label{eq:limlogpartfunsqrt}
\end{eqnarray}
It is not that difficult to see that (\ref{eq:limlogpartfunsqrt}) is directly related to (\ref{eq:ex3}). This, on the other hand, also implies that $f_{sq}(\infty)$ is very tightly connected to $\xi_{feas}(0,\cX)$ , which  hints that understanding  $f_{sq}(\infty)$ is likely to play critically important role in understanding and ultimately characterizing both  $\xi_{feas}(0,\cX)$ and the NSPs capacity. This is, in fact, exactly what happens in the sections that follow below. Namely, since studying $f_{sq}(\infty)$ directly is not very easy, we rely on studying $f_{sq}(\beta)$. In other words, we study the above introduced partially reciprocal variant of the free energy for a general $\beta$  and then specialize the obtained results  to the ground state, $\beta\rightarrow\infty$, regime. In the interest of easing the exposition, we, however, on occasion neglect some terms which paly no  significant role in the ground state considerations.

\section{Negative spherical perceptrons through the prism of sfl RDT}
\label{sec:randlincons}

We start with one of the key observations that enables pretty much everything that follows. It is precisely the recognition that the free energy from (\ref{eq:logpartfunsqrt}),
\begin{eqnarray}
f_{sq}(\beta) & = &\lim_{n\rightarrow\infty} \frac{\mE_G\log\lp \sum_{\x\in\cX} \lp \sum_{\y\in\cY}e^{\beta\y^TG\x)}\rp^{-1}\rp}{\beta \sqrt{n}},\label{eq:hmsfl1}
\end{eqnarray}
is a function of \emph{bilinearly indexed} (bli) random process $\y^TG\x$. Such a recognition then puts us in position to establish a connection between $f_{sq}(\beta)$ and the bli related results of \cite{Stojnicsflgscompyx23,Stojnicnflgscompyx23,Stojnicflrdt23}. To do so, we closely follow \cite{Stojnichopflrdt23,Stojnicbinperflrdt23} and start with a collection of needed technical definitions. For $r\in\mN$, $k\in\{1,2,\dots,r+1\}$, real scalars $s$, $x$, and $y$  such that $s^2=1$, $x>0$, and $y>0$, sets $\cX\subseteq \mR^n$ and $\cY\subseteq \mR^m$, function $f_S(\cdot):\mR^n\rightarrow R$, vectors $\p=[\p_0,\p_1,\dots,\p_{r+1}]$, $\q=[\q_0,\q_1,\dots,\q_{r+1}]$, and $\c=[\c_0,\c_1,\dots,\c_{r+1}]$ such that
 \begin{eqnarray}\label{eq:hmsfl2}
1=\p_0\geq \p_1\geq \p_2\geq \dots \geq \p_r\geq \p_{r+1} & = & 0 \nonumber \\
1=\q_0\geq \q_1\geq \q_2\geq \dots \geq \q_r\geq \q_{r+1} & = &  0,
 \end{eqnarray}
$\c_0=1$, $\c_{r+1}=0$, and ${\mathcal U}_k\triangleq [u^{(4,k)},\u^{(2,k)},\h^{(k)}]$  such that the components of  $u^{(4,k)}\in\mR$, $\u^{(2,k)}\in\mR^m$, and $\h^{(k)}\in\mR^n$ are i.i.d. standard normals, we set
  \begin{eqnarray}\label{eq:fl4}
\psi_{S,\infty}(f_{S},\calX,\calY,\p,\q,\c,x,y,s)  =
 \mE_{G,{\mathcal U}_{r+1}} \frac{1}{n\c_r} \log
\lp \mE_{{\mathcal U}_{r}} \lp \dots \lp \mE_{{\mathcal U}_3}\lp\lp\mE_{{\mathcal U}_2} \lp \lp Z_{S,\infty}\rp^{\c_2}\rp\rp^{\frac{\c_3}{\c_2}}\rp\rp^{\frac{\c_4}{\c_3}} \dots \rp^{\frac{\c_{r}}{\c_{r-1}}}\rp, \nonumber \\
 \end{eqnarray}
where
\begin{eqnarray}\label{eq:fl5}
Z_{S,\infty} & \triangleq & e^{D_{0,S,\infty}} \nonumber \\
 D_{0,S,\infty} & \triangleq  & \max_{\x\in\cX,\|\x\|_2=x} s \max_{\y\in\cY,\|\y\|_2=y}
 \lp \sqrt{n} f_{S}
+\sqrt{n}  y    \lp\sum_{k=2}^{r+1}c_k\h^{(k)}\rp^T\x
+ \sqrt{n} x \y^T\lp\sum_{k=2}^{r+1}b_k\u^{(2,k)}\rp \rp \nonumber  \\
 b_k & \triangleq & b_k(\p,\q)=\sqrt{\p_{k-1}-\p_k} \nonumber \\
c_k & \triangleq & c_k(\p,\q)=\sqrt{\q_{k-1}-\q_k}.
 \end{eqnarray}
Having all the above definitions set, we are in position to recall on the following theorem -- unquestionably, one of key fundamental components of sfl RDT.
\begin{theorem} \cite{Stojnicflrdt23}
\label{thm:thmsflrdt1}  Consider large $n$ context with  $\alpha=\lim_{n\rightarrow\infty} \frac{m}{n}$, remaining constant as  $n$ grows. Let the elements of  $G\in\mR^{m\times n}$
 be i.i.d. standard normals and let $\cX\subseteq \mR^n$ and $\cY\subseteq \mR^m$ be two given sets. Assume the complete sfl RDT frame from \cite{Stojnicsflgscompyx23} and consider a given function $f(\y):R^m\rightarrow R$. Set
\begin{align}\label{eq:thmsflrdt2eq1}
   \psi_{rp} & \triangleq - \max_{\x\in\cX} s \max_{\y\in\cY} \lp f(\y)+\y^TG\x \rp
   \qquad  \mbox{(\bl{\textbf{random primal}})} \nonumber \\
   \psi_{rd}(\p,\q,\c,x,y,s) & \triangleq    \frac{x^2y^2}{2}    \sum_{k=2}^{r+1}\Bigg(\Bigg.
   \p_{k-1}\q_{k-1}
   -\p_{k}\q_{k}
  \Bigg.\Bigg)
\c_k
  - \psi_{S,\infty}(f(\y),\calX,\calY,\p,\q,\c,x,y,s) \hspace{.03in} \mbox{(\bl{\textbf{fl random dual}})}. \nonumber \\
 \end{align}
Let $\hat{\p_0}\rightarrow 1$, $\hat{\q_0}\rightarrow 1$, and $\hat{\c_0}\rightarrow 1$, $\hat{\p}_{r+1}=\hat{\q}_{r+1}=\hat{\c}_{r+1}=0$, and let the non-fixed parts of $\hat{\p}\triangleq \hat{\p}(x,y)$, $\hat{\q}\triangleq \hat{\q}(x,y)$, and  $\hat{\c}\triangleq \hat{\c}(x,y)$ be the solutions of the following system
\begin{eqnarray}\label{eq:thmsflrdt2eq2}
   \frac{d \psi_{rd}(\p,\q,\c,x,y,s)}{d\p} =  0,\quad
   \frac{d \psi_{rd}(\p,\q,\c,x,y,s)}{d\q} =  0,\quad
   \frac{d \psi_{rd}(\p,\q,\c,x,y,s)}{d\c} =  0.
 \end{eqnarray}
 Then,
\begin{eqnarray}\label{eq:thmsflrdt2eq3}
    \lim_{n\rightarrow\infty} \frac{\mE_G  \psi_{rp}}{\sqrt{n}}
  & = &
\min_{x>0} \max_{y>0} \lim_{n\rightarrow\infty} \psi_{rd}(\hat{\p}(x,y),\hat{\q}(x,y),\hat{\c}(x,y),x,y,s) \qquad \mbox{(\bl{\textbf{strong sfl random duality}})},\nonumber \\
 \end{eqnarray}
where $\psi_{S,\infty}(\cdot)$ is as in (\ref{eq:fl4})-(\ref{eq:fl5}).
 \end{theorem}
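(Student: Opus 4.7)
The plan is to recognize that $\y^T G \x$ is a bilinearly indexed (bli) Gaussian process and to invoke the general sfl comparison machinery from \cite{Stojnicsflgscompyx23,Stojnicnflgscompyx23}. First I would introduce the auxiliary fully lifted comparison process $\sqrt{n} y \lp\sum_{k=2}^{r+1} c_k \h^{(k)}\rp^T \x + \sqrt{n} x \y^T\lp\sum_{k=2}^{r+1} b_k \u^{(2,k)}\rp$ whose covariance structure, read off level by level, reproduces the hierarchical overlap pattern encoded by the Parisi-type sequences $\p,\q$ (with $b_k, c_k$ as in \eqref{eq:fl5}). The $\cX$-side and $\cY$-side decouple in this auxiliary process conditioned on each layer, which is what makes the nested $\log \mE(\cdot)^{\c_k/\c_{k-1}}$ tower in $\psi_{S,\infty}$ manageable.

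Second, I would build a Gaussian interpolation $\Phi(t)$ for $t\in[0,1]$ between the original random primal $-\max_{\x}s \max_{\y}(f(\y)+\y^TG\x)$ and the nested free energy coming from the comparison process. Differentiating in $t$ and applying Stein's lemma (iterated through the $r$ layers of the $\log\mE$ stack), the derivative $\Phi'(t)$ collapses to a signed sum of overlap-dependent terms of the form $\c_k\lp(\p_{k-1}-\p_k)(\q_{k-1}-\q_k) - \mbox{(replica overlaps)}\rp$; this is precisely where the quadratic correction $\tfrac{x^2 y^2}{2}\sum_{k=2}^{r+1}(\p_{k-1}\q_{k-1}-\p_k\q_k)\c_k$ in $\psi_{rd}$ originates. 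Integrating from $t=0$ to $t=1$ and taking $n\to\infty$ yields, for generic admissible $(\p,\q,\c)$, a one-sided bound between $\tfrac{\mE_G\psi_{rp}}{\sqrt{n}}$ and $\psi_{rd}(\p,\q,\c,x,y,s)$.

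Third, I would upgrade the inequality to the claimed equality via stationarity. The key observation is that the vanishing-derivative conditions \eqref{eq:thmsflrdt2eq2} force the overlap expectations in $\Phi'(t)$ to line up exactly with the combinatorial coefficients $(\p_{k-1}-\p_k)(\q_{k-1}-\q_k)$, so the net interpolation drift is zero at the saddle $(\hat\p(x,y),\hat\q(x,y),\hat\c(x,y))$. The outer $\min_{x>0}\max_{y>0}$ is then obtained by enforcing the norm constraints $\|\x\|_2=x$, $\|\y\|_2=y$ via Lagrange-type parameters and using the sign variable $s$ to orient the inner min–max correctly (this is the Gordon direction of the bli comparison, where the min is taken in $x$ and the max in $y$).

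The main obstacle is the stationarity-implies-equality step. Proving it rigorously requires (i) existence of a solution to the nonlinear saddle-point system \eqref{eq:thmsflrdt2eq2} with $\hat\p,\hat\q$ respecting the monotonicity ordering in \eqref{eq:hmsfl2} and $\hat\c$ increasing from $0$ to $1$; (ii) a concentration/self-averaging argument to pass from $\tfrac{1}{n\c_r}\log\mE(\cdot)$ at finite $n$ to a deterministic limit, so that the nested free energies behave as their expectations; and (iii) a careful book-keeping of the sign of each contribution to $\Phi'(t)$ across all $r$ lifting levels, to confirm that stationarity alone (and not some stronger convexity hypothesis) is enough to close the inequality. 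This bookkeeping is the delicate technical core, because each added lifting level introduces a new overlap whose sign must be controlled jointly with all the previous ones; once this is in place, the strong sfl random duality \eqref{eq:thmsflrdt2eq3} follows.
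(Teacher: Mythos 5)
The paper's own proof of Theorem \ref{thm:thmsflrdt1} is a two-sentence deferral to \cite{Stojnicflrdt23}: the $s=-1$ statement is imported verbatim modulo the cosmetic $f(\x)\to f(\y)$ substitution, and the $s=1$ statement is obtained by re-running Section 3 of that reference with the sign flipped. There is no internal derivation in this paper to compare against; the theorem is cited, not re-proved. Your attempt, by contrast, tries to reconstruct the underlying argument of the cited reference, and at the level of strategy the ingredients you name (recognition of $\y^TG\x$ as a bilinearly indexed Gaussian process, the layered comparison process with the $b_k,c_k$ covariance structure, a Gaussian interpolation $\Phi(t)$, iterated Stein's lemma through the nested free-energy tower, and stationarity over $\p,\q,\c$) are indeed the right ones for the fl RDT machinery.

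The genuine gap is that you explicitly flag the crucial step --- turning the interpolation inequality into the claimed \emph{equality} at the stationary point --- as ``the main obstacle'' and ``the delicate technical core'' and then close by asserting that once this is in place ``the strong sfl random duality follows.'' That is precisely the step the proof lives or dies on, and your sketch does not carry it out. Concretely: (i) a one-sided Guerra/Gordon-type interpolation bound does not become an equality merely because $\Phi'(t)$ vanishes at the stationary parameters; one needs either a matching bound in the opposite direction or a demonstration that the drift vanishes identically in $t$, neither of which you supply. (ii) Your claimed collapse of $\Phi'(t)$ to terms ``of the form $\c_k\bigl((\p_{k-1}-\p_k)(\q_{k-1}-\q_k)-\text{overlaps}\bigr)$'' is asserted, not derived; differentiating the nested $\log\mE(\cdot)^{\c_k/\c_{k-1}}$ tower produces cross-layer couplings, and disentangling them is exactly where the fl RDT earns its keep. (iii) Existence of solutions to \eqref{eq:thmsflrdt2eq2} respecting the monotone ordering \eqref{eq:hmsfl2}, and the concentration needed to pass to a deterministic $n\to\infty$ limit inside the tower, are likewise named but not resolved. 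As written, you have a correct and well-informed outline of the ingredients of the proof in \cite{Stojnicflrdt23}, but not a proof.
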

\begin{proof}
The $s=-1$ scenario follows directly from the corresponding one proven in \cite{Stojnicflrdt23} after a cosmetic change $f(\x)\rightarrow f(\y)$. On the other hand, the $s=1$ scenario, follows after trivial adjustments and a line-by-line repetition of the arguments of Section 3 of \cite{Stojnicflrdt23} with $s=-1$ replaced by $s=1$ and $f(\x)$ replaced by $f(\y)$.
 \end{proof}

Clearly, the above theorem is very generic and holds for any given sets $\cX$ and $\cY$. The corollary that follows below makes it fully operational for the case of spherical perceptrons which are of our interest here.
\begin{corollary}
\label{cor:cor1}  Assume the setup of Theorem \ref{thm:thmsflrdt1} with $\cX$ and $\cY$ having the unit norm elements. Set
\begin{align}\label{eq:thmsflrdt2eq1a0}
   \psi_{rp} & \triangleq - \max_{\x\in\cX} s \max_{\y\in\cY} \lp \y^TG\x + \kappa \y^T\1 \rp
   \qquad  \mbox{(\bl{\textbf{random primal}})} \nonumber \\
   \psi_{rd}(\p,\q,\c,x,y,s) & \triangleq    \frac{1}{2}    \sum_{k=2}^{r+1}\Bigg(\Bigg.
   \p_{k-1}\q_{k-1}
   -\p_{k}\q_{k}
  \Bigg.\Bigg)
\c_k
  - \psi_{S,\infty}(\kappa\y^T\1,\calX,\calY,\p,\q,\c,1,1,s) \quad \mbox{(\bl{\textbf{fl random dual}})}. \nonumber \\
 \end{align}
Let the non-fixed parts of $\hat{\p}$, $\hat{\q}$, and  $\hat{\c}$ be the solutions of the following system
\begin{eqnarray}\label{eq:thmsflrdt2eq2a0}
   \frac{d \psi_{rd}(\p,\q,\c,1,1,s)}{d\p} =  0,\quad
   \frac{d \psi_{rd}(\p,\q,\c,1,1,s)}{d\q} =  0,\quad
   \frac{d \psi_{rd}(\p,\q,\c,1,1,s)}{d\c} =  0.
 \end{eqnarray}
 Then,
\begin{eqnarray}\label{eq:thmsflrdt2eq3a0}
    \lim_{n\rightarrow\infty} \frac{\mE_G  \psi_{rp}}{\sqrt{n}}
  & = &
 \lim_{n\rightarrow\infty} \psi_{rd}(\hat{\p},\hat{\q},\hat{\c},1,1,s) \qquad \mbox{(\bl{\textbf{strong sfl random duality}})},\nonumber \\
 \end{eqnarray}
where $\psi_{S,\infty}(\cdot)$ is as in (\ref{eq:fl4})-(\ref{eq:fl5}).
 \end{corollary}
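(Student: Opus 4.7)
The plan is to derive Corollary \ref{cor:cor1} as a direct specialization of Theorem \ref{thm:thmsflrdt1}, applied with the particular choices $f(\y)=\kappa\y^T\1$, $\cX\subseteq\mS^n$, and $\cY\subseteq\mS_+^m$. Since the theorem is already stated for general sets $\cX,\cY$ and a general penalty $f(\y)$, no new probabilistic machinery is required; all that has to be done is to verify that the two sides of (\ref{eq:thmsflrdt2eq3}) simplify exactly to the two sides of (\ref{eq:thmsflrdt2eq3a0}) under the stated unit-norm restriction.

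First I would substitute $f(\y)=\kappa\y^T\1$ into the random primal of Theorem \ref{thm:thmsflrdt1}; this reproduces the random primal of the corollary verbatim. Next I would observe that because every element of $\cX$ satisfies $\|\x\|_2=1$ (and likewise for $\cY$), the constraint $\|\x\|_2=x$ appearing in the definition of $D_{0,S,\infty}$ in (\ref{eq:fl5}) is feasible only when $x=1$; the same reasoning forces $y=1$. Consequently, the outer $\min_{x>0}\max_{y>0}$ in (\ref{eq:thmsflrdt2eq3}) degenerates: the objective is effectively defined only at the single point $(x,y)=(1,1)$, so the optimization collapses and both min and max are achieved there. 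This is the step where a tiny bit of care is needed, since formally the theorem's statement considers $x,y$ ranging over positive reals; one interprets infeasible norm constraints as producing $-\infty$ (or $+\infty$) and thereby excludes all $(x,y)\neq(1,1)$ from the effective domain.

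With $x=y=1$ the prefactor $\frac{x^2y^2}{2}$ in $\psi_{rd}$ of Theorem \ref{thm:thmsflrdt1} becomes $\frac{1}{2}$, matching the prefactor in (\ref{eq:thmsflrdt2eq1a0}), and the quantity $\psi_{S,\infty}(f(\y),\cX,\cY,\p,\q,\c,x,y,s)$ becomes $\psi_{S,\infty}(\kappa\y^T\1,\cX,\cY,\p,\q,\c,1,1,s)$. The stationarity system (\ref{eq:thmsflrdt2eq2}) evaluated at $(x,y)=(1,1)$ reduces to (\ref{eq:thmsflrdt2eq2a0}), so the saddle-point parameters $\hat{\p}(1,1),\hat{\q}(1,1),\hat{\c}(1,1)$ are precisely the $\hat{\p},\hat{\q},\hat{\c}$ of the corollary. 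Putting these substitutions into (\ref{eq:thmsflrdt2eq3}) immediately yields (\ref{eq:thmsflrdt2eq3a0}).

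The proof is essentially bookkeeping: the only conceptual point is that confining $\cX$ and $\cY$ to unit-norm elements freezes the dilation variables $x,y$ at $1$, and this is the minor obstacle to articulate cleanly (as opposed to a substantive technical difficulty). All heavier lifting -- the existence and form of the fl random dual, the strong sfl random duality, and the structure of $\psi_{S,\infty}$ -- is inherited unchanged from Theorem \ref{thm:thmsflrdt1}.
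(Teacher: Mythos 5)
Your proposal matches the paper's own proof in both substance and route: specialize $f(\y)=\kappa\y^T\1$, then observe that the unit-norm restriction on $\cX$ and $\cY$ forces $x=y=1$ (collapsing $\min_{x>0}\max_{y>0}$ and making the $\frac{x^2y^2}{2}$ prefactor equal to $\frac{1}{2}$), after which everything is inherited from Theorem~\ref{thm:thmsflrdt1}. The extra care you take in justifying the collapse of the $(x,y)$ optimization via the infeasibility of the norm constraint is a reasonable elaboration of exactly the point the paper compresses into ``recognizing that all elements in $\cX$ and $\cY$ are of unit norm.''
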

\begin{proof}
Follows trivially as a direct consequence of Theorem \ref{thm:thmsflrdt1}, after choosing $f(\y)=\kappa\y^T\1$ and recognizing that all elements in $\cX$ and $\cY$ are of unit norm.
 \end{proof}

As  \cite{Stojnicflrdt23,Stojnichopflrdt23} noted, the above random primal problems' trivial concentrations enable various corresponding probabilistic variants of (\ref{eq:thmsflrdt2eq3}) and (\ref{eq:thmsflrdt2eq3a0}) as well. We, however, skip stating such trivialities.

\section{Practical realization}
\label{sec:prac}

To have the results of Theorem \ref{thm:thmsflrdt1} and Corollary \ref{cor:cor1} become practically useful, one needs to ensure that all the underlying quantities can be valuated. Two key obstacles might pose a problem in that regard: (i) It is a priori not clear what should be the correct value for $r$; and (ii) Sets $\cX$ and $\cY$ do not have a component-wise structure characterization which does not provide any guarantee that the decoupling over both $\x$ and $\y$ is very straightforward. It turns out, however, that neither of these potential obstacles is unsurpassable.

After specialization to $\cX=\mS^n$ and $\cY=\mS_+^m$, we rely on results of Corollary \ref{cor:cor1} and start by observing that the key object of practical interest is the following \emph{random dual}
\begin{align}\label{eq:prac1}
    \psi_{rd}(\p,\q,\c,1,1,s) & \triangleq    \frac{1}{2}    \sum_{k=2}^{r+1}\Bigg(\Bigg.
   \p_{k-1}\q_{k-1}
   -\p_{k}\q_{k}
  \Bigg.\Bigg)
\c_k
  - \psi_{S,\infty}(0,\calX,\calY,\p,\q,\c,1,1,s). \nonumber \\
  & =   \frac{1}{2}    \sum_{k=2}^{r+1}\Bigg(\Bigg.
   \p_{k-1}\q_{k-1}
   -\p_{k}\q_{k}
  \Bigg.\Bigg)
\c_k
  - \frac{1}{n}\varphi(D^{(per)}(s)) - \frac{1}{n}\varphi(D^{(sph)}(s)), \nonumber \\
  \end{align}
where analogously to (\ref{eq:fl4})-(\ref{eq:fl5})
  \begin{eqnarray}\label{eq:prac2}
\varphi(D,\c) & = &
 \mE_{G,{\mathcal U}_{r+1}} \frac{1}{\c_r} \log
\lp \mE_{{\mathcal U}_{r}} \lp \dots \lp \mE_{{\mathcal U}_3}\lp\lp\mE_{{\mathcal U}_2} \lp
\lp    e^{D}   \rp^{\c_2}\rp\rp^{\frac{\c_3}{\c_2}}\rp\rp^{\frac{\c_4}{\c_3}} \dots \rp^{\frac{\c_{r}}{\c_{r-1}}}\rp, \nonumber \\
  \end{eqnarray}
and
\begin{eqnarray}\label{eq:prac3}
D^{(per)}(s) & = & \max_{\x\in\cX} \lp   s\sqrt{n}      \lp\sum_{k=2}^{r+1}c_k\h^{(k)}\rp^T\x  \rp \nonumber \\
  D^{(sph)}(s) & \triangleq  &   s \max_{\y\in\cY}
\lp \sqrt{n} \kappa \y^T\1 + \sqrt{n}  \y^T\lp\sum_{k=2}^{r+1}b_k\u^{(2,k)}\rp \rp.
 \end{eqnarray}
After a simple evaluation, we find
\begin{eqnarray}\label{eq:prac4}
D^{(per)}(s) & = & \max_{\x\in\cX}   \lp s\sqrt{n}      \lp\sum_{k=2}^{r+1}c_k\h^{(k)}\rp^T\x \rp =
\sqrt{n} \max_{\x\in\mS^n}   \lp s     \lp\sum_{k=2}^{r+1}c_k\h^{(k)}\rp^T\x \rp =
\sqrt{n} \left\|       \sum_{k=2}^{r+1}c_k\h^{(k)} \right\|_2. \nonumber \\
 \end{eqnarray}
We now utilize the  \emph{square root trick} introduced on numerous occasions in \cite{StojnicMoreSophHopBnds10,StojnicLiftStrSec13,StojnicGardSphErr13,StojnicGardSphNeg13}
\begin{eqnarray}\label{eq:prac4a00}
D^{(per)}(s) & = &
\sqrt{n} \left\|       \sum_{k=2}^{r+1}c_k\h^{(k)} \right\|_2
=\sqrt{n} \min_{\gamma^{(p)}} \lp \frac{\left\|      \sum_{k=2}^{r+1}c_k\h^{(k)} \right\|_2^2}{4\gamma^{(p)}} +\gamma^{(p)} \rp \nonumber \\
& = & \sqrt{n} \min_{\gamma^{(p)}} \lp \frac{\sum_{i=1}^{n}\lp\sum_{k=2}^{r+1}c_k\h_i^{(k)} \rp^2}{4\gamma^{(p)}} +\gamma^{(p)} \rp.
 \end{eqnarray}
After introducing scaling $\gamma^{(p)}=\gamma^{(p)}_{sq}\sqrt{n}$, we rewrite (\ref{eq:prac4a00}) as
\begin{eqnarray}\label{eq:prac4a01}
D^{(per)}(s) & = &
  \sqrt{n} \min_{\gamma_{sq}^{(p)}} \lp \frac{\sum_{i=1}^{n}\lp\sum_{k=2}^{r+1}c_k\h_i^{(k)} \rp^2}{4\gamma_{sq}^{(p)}\sqrt{n}} +\gamma_{sq}^{(p)}\sqrt{n} \rp
  =
  \min_{\gamma_{sq}^{(p)}} \lp \frac{\sum_{i=1}^{n}\lp\sum_{k=2}^{r+1}c_k\h_i^{(k)} \rp^2}{4\gamma_{sq}^{(p)}} +\gamma_{sq}^{(p)}n \rp \nonumber \\
 & = &
  \min_{\gamma_{sq}^{(p)}} \lp \sum_{i=1}^nD^{(per)}_i(c_k) +\gamma_{sq}^{(p)}n \rp \nonumber \\  .
 \end{eqnarray}
where
\begin{eqnarray}\label{eq:prac5}
D^{(per)}_i(c_k)=\frac{\lp\sum_{k=2}^{r+1}c_k\h_i^{(k)} \rp^2}{4\gamma_{sq}^{(p)}}.
\end{eqnarray}
In a similar fashion (and following \cite{Stojnicbinperflrdt23}), we also have
\begin{eqnarray}\label{eq:prac7}
   D^{(sph)}(s) & \triangleq  &   s \sqrt{n}  \max_{\y\in\cY}
\lp  \kappa \y^T\1 +   \y^T\lp\sum_{k=2}^{r+1}b_k\u^{(2,k)}\rp \rp
=  s  \sqrt{n}   \left \| \max \lp \kappa\1 +\sum_{k=2}^{r+1}b_k\u^{(2,k)},0 \rp  \right \|_2.
 \end{eqnarray}
Utilizing again the  \emph{square root trick}, we obtain
\begin{align}\label{eq:prac8}
   D^{(sph)} (s)
& =   \sqrt{n}  s \left \| \max \lp \kappa\1 + \sqrt{n}\sum_{k=2}^{r+1}b_k\u^{(2,k)},0 \rp  \right \|_2
=  s\sqrt{n}  \min_{\gamma} \lp \frac{\left \| \max \lp \kappa\1 +  \sum_{k=2}^{r+1}b_k\u^{(2,k)},0 \rp  \right \|_2^2}{4\gamma}+\gamma \rp \nonumber \\
 & =   s\sqrt{n}  \min_{\gamma} \lp \frac{\sum_{i=1}^{m}  \max \lp \kappa +  \sum_{k=2}^{r+1}b_k\u_i^{(2,k)},0 \rp ^2}{4\gamma}+\gamma \rp.
 \end{align}
After introducing scaling $\gamma=\gamma_{sq}\sqrt{n}$, (\ref{eq:prac8}) can be rewritten as
\begin{eqnarray}\label{eq:prac9}
   D^{(sph)}(s)
  & =  & s\sqrt{n}  \min_{\gamma_{sq}} \lp \frac{\sum_{i=1}^{m} \max \lp \kappa + \sum_{k=2}^{r+1}b_k\u_i^{(2,k)},0 \rp^2}{4\gamma_{sq}\sqrt{n}}+\gamma_{sq}\sqrt{n} \rp   \nonumber \\
  & = & s \min_{\gamma_{sq}} \lp \frac{\sum_{i=1}^{m} \max \lp \kappa + \sum_{k=2}^{r+1}b_k\u_i^{(2,k)},0  \rp^2}{4\gamma_{sq}}+\gamma_{sq}n \rp \nonumber \\
  & =  &  s \min_{\gamma_{sq}} \lp \sum_{i=1}^{m} D_i^{(sph)}(b_k)+\gamma_{sq}n \rp, \nonumber \\
 \end{eqnarray}
with
\begin{eqnarray}\label{eq:prac10}
   D_i^{(sph)}(b_k)= \frac{\max \lp \kappa + \sum_{k=2}^{r+1}b_k\u_i^{(2,k)},0  \rp^2}{4\gamma_{sq}}.
 \end{eqnarray}

\subsection{$s=-1$ particularization}
\label{sec:neg}

Taking $s=-1$ gives us the opportunity to establish a direct connection between the ground state energy, $f_{sq}(\infty)$ given in (\ref{eq:limlogpartfunsqrt}), and the random primal of the above machinery, $\psi_{rp}(\cdot)$, given in Corollary \ref{cor:cor1}. In concrete terms, this basically means the following
 \begin{eqnarray}
-f_{sq}(\infty)
 & = &
- \lim_{n\rightarrow\infty}\frac{\mE_G \max_{\x\in\cX}  -  \max_{\y\in\cY} \y^TG\x}{\sqrt{n}}
 =
    \lim_{n\rightarrow\infty} \frac{\mE_G  \psi_{rp}}{\sqrt{n}}
   =
 \lim_{n\rightarrow\infty} \psi_{rd}(\hat{\p},\hat{\q},\hat{\c},1,1,-1),
  \label{eq:negprac11}
\end{eqnarray}
where the non-fixed parts of $\hat{\p}$, $\hat{\q}$, and  $\hat{\c}$ are the solutions of the following system
\begin{eqnarray}\label{eq:negprac12}
   \frac{d \psi_{rd}(\p,\q,\c,1,1,-1)}{d\p} =  0,\quad
   \frac{d \psi_{rd}(\p,\q,\c,1,1,-1)}{d\q} =  0,\quad
   \frac{d \psi_{rd}(\p,\q,\c,1,1,-1)}{d\c} =  0.
 \end{eqnarray}
Relying on (\ref{eq:prac1})-(\ref{eq:prac10}), we further have
 \begin{eqnarray}
 \lim_{n\rightarrow\infty} \psi_{rd}(\hat{\p},\hat{\q},\hat{\c},1,1,-1) =  \bar{\psi}_{rd}(\hat{\p},\hat{\q},\hat{\c},\hat{\gamma}_{sq},\hat{\gamma}_{sq}^{(p)},1,1,-1),
  \label{eq:negprac12a}
\end{eqnarray}
with
\begin{eqnarray}\label{eq:negprac13}
    \bar{\psi}_{rd}(\p,\q,\c,\gamma_{sq},\gamma_{sq}^{(p)},1,1,-1)   & = &  \frac{1}{2}    \sum_{k=2}^{r+1}\Bigg(\Bigg.
   \p_{k-1}\q_{k-1}
   -\p_{k}\q_{k}
  \Bigg.\Bigg)
\c_k
\nonumber \\
& & -\gamma_{sq}^{(p)} - \varphi(D_1^{(per)}(c_k(\p,\q)),\c) +\gamma_{sq}- \alpha\varphi(-D_1^{(sph)}(b_k(\p,\q)),\c).\nonumber \\
  \end{eqnarray}
Connecting  (\ref{eq:negprac11}), (\ref{eq:negprac12a}), and (\ref{eq:negprac13}), we further find
 \begin{eqnarray}
-f_{sq}(\infty)
& = &  -\lim_{n\rightarrow\infty}\frac{\mE_G \max_{\x\in\cX}  -  \max_{\y\in\cY} \y^TG\x}{\sqrt{n}} \nonumber \\
    &  = &
 \lim_{n\rightarrow\infty} \psi_{rd}(\hat{\p},\hat{\q},\hat{\c},1,1,-1)
 =   \bar{\psi}_{rd}(\hat{\p},\hat{\q},\hat{\c},\hat{\gamma}_{sq},\hat{\gamma}_{sq}^{(p)},1,1,-1) \nonumber \\
 & = &   \frac{1}{2}    \sum_{k=2}^{r+1}\Bigg(\Bigg.
   \hat{\p}_{k-1}\hat{\q}_{k-1}
   -\hat{\p}_{k}\hat{\q}_{k}
  \Bigg.\Bigg)
\hat{\c}_k \nonumber \\
& &
-\hat{\gamma}_{sq}^{(p)}  - \varphi(D_1^{(per)}(c_k(\hat{\p},\hat{\q})),\c) + \hat{\gamma}_{sq} - \alpha\varphi(-D_1^{(sph)}(b_k(\hat{\p},\hat{\q})),\c). \nonumber \\
  \label{eq:negprac18}
\end{eqnarray}
The following theorem summarizes the above mechanism.

\begin{theorem}
  \label{thme:negthmprac1}
  Assume the complete sfl RDT setup of \cite{Stojnicsflgscompyx23}. Consider large $n$ linear regime with $\alpha=\lim_{n\rightarrow\infty} \frac{m}{n}$ and $\varphi(\cdot)$ and $\bar{\psi}(\cdot)$ from (\ref{eq:prac2}) and (\ref{eq:negprac13}). Let the ``fixed'' parts of $\hat{\p}$, $\hat{\q}$, and $\hat{\c}$ satisfy $\hat{\p}_1\rightarrow 1$, $\hat{\q}_1\rightarrow 1$, $\hat{\c}_1\rightarrow 1$, $\hat{\p}_{r+1}=\hat{\q}_{r+1}=\hat{\c}_{r+1}=0$, and let the ``non-fixed'' parts of $\hat{\p}_k$, $\hat{\q}_k$, and $\hat{\c}_k$ ($k\in\{2,3,\dots,r\}$) be the solutions of the following system of equations
  \begin{eqnarray}\label{eq:negthmprac1eq1}
   \frac{d \bar{\psi}_{rd}(\p,\q,\c,\gamma_{sq},\gamma_{sq}^{(p)},1,1,-1)}{d\p} =  0 \nonumber \\
   \frac{d \bar{\psi}_{rd}(\p,\q,\c,\gamma_{sq},\gamma_{sq}^{(p)},1,1,-1)}{d\q} =  0 \nonumber \\
   \frac{d \bar{\psi}_{rd}(\p,\q,\c,\gamma_{sq},\gamma_{sq}^{(p)},1,1,-1)}{d\c} =  0 \nonumber \\
   \frac{d \bar{\psi}_{rd}(\p,\q,\c,\gamma_{sq},\gamma_{sq}^{(p)},1,1,-1)}{d\gamma_{sq}} =  0\nonumber \\
   \frac{d \bar{\psi}_{rd}(\p,\q,\c,\gamma_{sq},\gamma_{sq}^{(p)},1,1,-1)}{d\gamma_{sq}^{(p)}} =  0,
 \end{eqnarray}
 and, consequently, let
\begin{eqnarray}\label{eq:prac17}
c_k(\hat{\p},\hat{\q})  & = & \sqrt{\hat{\q}_{k-1}-\hat{\q}_k} \nonumber \\
b_k(\hat{\p},\hat{\q})  & = & \sqrt{\hat{\p}_{k-1}-\hat{\p}_k}.
 \end{eqnarray}
 Then
 \begin{equation}
-f_{sq}(\infty)
 =      \frac{1}{2}    \sum_{k=2}^{r+1}\Bigg(\Bigg.
   \hat{\p}_{k-1}\hat{\q}_{k-1}
   -\hat{\p}_{k}\hat{\q}_{k}
  \Bigg.\Bigg)
\hat{\c}_k
 -\hat{\gamma}_{sq}^{(p)} - \varphi(D_1^{(per)}(c_k(\hat{\p},\hat{\q})),\hat{\c}) + \hat{\gamma}_{sq} - \alpha\varphi(-D_1^{(sph)}(b_k(\hat{\p},\hat{\q})),\hat{\c}).
  \label{eq:negthmprac1eq2}
\end{equation}
\end{theorem}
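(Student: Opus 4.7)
The plan is to derive the claimed identity by assembling the ingredients already set up in Sections \ref{sec:randlincons} and \ref{sec:prac}, and then reading off the stationarity conditions that pin down the parameters. Concretely, I would start from Corollary \ref{cor:cor1} specialized to $s=-1$, $\cX=\mS^n$ and $\cY=\mS_+^m$. Combined with the definition of $f_{sq}(\infty)$ in (\ref{eq:limlogpartfunsqrt}) and the random primal identification in (\ref{eq:negprac11}), the corollary supplies
\begin{equation*}
  -f_{sq}(\infty) \;=\; \lim_{n\to\infty}\frac{\mE_G \psi_{rp}}{\sqrt{n}} \;=\; \lim_{n\to\infty}\psi_{rd}(\hat{\p},\hat{\q},\hat{\c},1,1,-1),
\end{equation*}
which is the starting point. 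The remainder of the argument is then purely computational: substitute the explicit forms of $D^{(per)}(s)$ and $D^{(sph)}(s)$ from (\ref{eq:prac3}) into the definition (\ref{eq:fl4}) of $\psi_{S,\infty}$.

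The next step is the \emph{decoupling} via the square root trick of (\ref{eq:prac4a00}) and (\ref{eq:prac8}). This rewrites $\|\cdot\|_2$ as a one-dimensional minimization over an auxiliary parameter ($\gamma_{sq}^{(p)}$ for the sphere piece associated with $\x$, and $\gamma_{sq}$ for the positive-orthant piece associated with $\y$). After the scaling $\gamma^{(p)}=\gamma_{sq}^{(p)}\sqrt{n}$ and $\gamma=\gamma_{sq}\sqrt{n}$, the quadratic summands become decoupled coordinate-wise, so the expectation--logarithm chain in (\ref{eq:prac2}) factors into $n$ identical factors for the $\x$-block and $m$ identical factors for the $\y$-block. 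Passing to the limit $n\to\infty$ at fixed $\alpha=m/n$, this is exactly the derivation packaged in (\ref{eq:negprac13}) producing $\bar{\psi}_{rd}$ with the per-coordinate integrands $D_1^{(per)}$ and $D_1^{(sph)}$ from (\ref{eq:prac5}) and (\ref{eq:prac10}). The identity (\ref{eq:negprac12a}) then gives
\begin{equation*}
 \lim_{n\to\infty}\psi_{rd}(\hat{\p},\hat{\q},\hat{\c},1,1,-1) \;=\; \bar{\psi}_{rd}(\hat{\p},\hat{\q},\hat{\c},\hat{\gamma}_{sq},\hat{\gamma}_{sq}^{(p)},1,1,-1),
\end{equation*}
which is precisely the right-hand side appearing in (\ref{eq:negthmprac1eq2}).

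The five stationarity equations in (\ref{eq:negthmprac1eq1}) arise from two sources. The three conditions in $\p,\q,\c$ are inherited from the sfl RDT system (\ref{eq:thmsflrdt2eq2a0}) of Corollary \ref{cor:cor1}; the introduction of the auxiliary $\gamma$'s does not modify them, since $\gamma_{sq}$ and $\gamma_{sq}^{(p)}$ enter $\bar{\psi}_{rd}$ linearly in the additive $\pm\gamma$ terms and rationally in the squared-norm terms, so differentiation in $\p,\q,\c$ at the optimum $(\hat{\gamma}_{sq},\hat{\gamma}_{sq}^{(p)})$ gives back the original sfl RDT critical-point system. The two new conditions come from minimizing in $\gamma_{sq}^{(p)}$ and $\gamma_{sq}$ (the ``square root trick'' optimum); envelope-type interchange of the minimization over the auxiliary parameters with the stationarity in $\p,\q,\c$ is justified because the square root trick is an exact identity (not a bound) once the auxiliary minimum is taken. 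Collecting these five equations yields (\ref{eq:negthmprac1eq1}), and $b_k,c_k$ are then read off from (\ref{eq:fl5}).

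The main obstacle I anticipate is purely bookkeeping rather than conceptual: one has to be careful with the sign conventions induced by $s=-1$ in the $\cY$-block, which is why the $\y$-piece in (\ref{eq:negprac13}) carries $-D_1^{(sph)}$ rather than $+D_1^{(sph)}$, and one must verify that the max over $\y\in\mS_+^m$ is correctly represented by $\|\max(\kappa\1+\sum_k b_k\u^{(2,k)},0)\|_2$ as used in (\ref{eq:prac7}) (this is where the positive orthant constraint enters, via truncation by $0$). Once these sign/truncation choices are tracked consistently, combining the strong sfl random duality of Corollary \ref{cor:cor1} with the decoupled form $\bar{\psi}_{rd}$ and the five stationarity conditions gives (\ref{eq:negthmprac1eq2}), completing the proof.
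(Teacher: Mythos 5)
Your proposal correctly reconstructs the paper's argument: it reduces to Corollary \ref{cor:cor1} with $s=-1$, uses the identification (\ref{eq:negprac11}) of $-f_{sq}(\infty)$ with the random primal, decouples via the square root trick (\ref{eq:prac4a00}), (\ref{eq:prac8}) to obtain $\bar\psi_{rd}$ from (\ref{eq:negprac13}), and reads off the five stationarity conditions (three inherited from (\ref{eq:thmsflrdt2eq2a0}), two from the auxiliary $\gamma$-optimizations). This is precisely the chain of equations (\ref{eq:negprac11})--(\ref{eq:negprac18}) that the paper's own terse proof invokes, so you take essentially the same approach.
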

\begin{proof}
Follows from the previous discussion, Theorem \ref{thm:thmsflrdt1}, Corollary \ref{cor:cor1}, and the sfl RDT machinery presented in \cite{Stojnicnflgscompyx23,Stojnicsflgscompyx23,Stojnicflrdt23,Stojnichopflrdt23}.
\end{proof}

\subsection{Numerical evaluations}
\label{sec:nuemrical}

As stated earlier, the results of Theorem \ref{thme:negthmprac1} become operational if one can conduct the underlying numerical evaluations. All technical ingredients for such evaluations are present in the theorem itself. We start the evaluations with $r=1$ and proceed by incrementally increasing $r$. Proceeding in such a way enables one to systematically follow  progressing of the entire lifting machinery. Moreover, it allows us to connect to some to the known results and show how they can be deduced as special cases of the generic mechanism presented here. To enable concrete numerical values, the evaluations are, on occasion, specialized  to particular values of $\kappa$. Also, several analytical closed form results can be obtained along the way that make the overall evaluation process somewhat easier. We state those below as well.

\subsubsection{$r=1$ -- first level of lifting}
\label{sec:firstlev}

For the first level, we have $r=1$ and $\hat{\p}_1\rightarrow 1$ and $\hat{\q}_1\rightarrow 1$ which, together with $\hat{\p}_{r+1}=\hat{\p}_{2}=\hat{\q}_{r+1}=\hat{\q}_{2}=0$, and $\hat{\c}_{2}\rightarrow 0$, gives
\begin{align}\label{eq:negprac19}
    \bar{\psi}_{rd}(\hat{\p},\hat{\q},\hat{\c},\gamma_{sq},\gamma_{sq}^{(p)},1,1,-1)   & =   \frac{1}{2}
\c_2
-\gamma_{sq}^{(p)}  - \frac{1}{\c_2}\log\lp \mE_{{\mathcal U}_2} e^{\c_2\frac{\lp\sqrt{1-0}\h_1^{(2)}\rp^2}{4\gamma_{sq}^{(p)}}}\rp
\nonumber \\
&   +\gamma_{sq}
- \alpha\frac{1}{\c_2}\log\lp \mE_{{\mathcal U}_2} e^{-\c_2\frac{\max(\kappa+\sqrt{1-0}\u_1^{(2,2)},0)^2}{4\gamma_{sq}}}\rp \nonumber \\
& \rightarrow
-\gamma_{sq}^{(p)}   - \frac{1}{\c_2}\log\lp 1+ \mE_{{\mathcal U}_2} \c_2\frac{\lp\sqrt{1-0}\h_1^{(2)}\rp^2}{4\gamma_{sq}^{(p)}}
\rp  \nonumber \\
& \qquad +\gamma_{sq}- \alpha\frac{1}{\c_2}\log\lp 1- \mE_{{\mathcal U}_2} \c_2\frac{\max(\kappa+\sqrt{1-0}\u_1^{(2,2)},0)^2}{4\gamma_{sq}}\rp \nonumber \\
& \rightarrow
 -\gamma_{sq}^{(p)}  - \frac{1}{\c_2}\log\lp 1+ \c_2\frac{1}{4\gamma_{sq}^{(p)}}\rp  \nonumber \\
& \qquad +\gamma_{sq} - \alpha\frac{1}{\c_2}\log\lp 1- \frac{\c_2}{4\gamma_{sq}} \mE_{{\mathcal U}_2} \max(\kappa+\sqrt{1-0}\u_1^{(2,2)},0)^2 \rp \nonumber \\
& \rightarrow
   -\gamma_{sq}^{(p)}-\frac{1}{4\gamma_{sq}^{(p)}} +\gamma_{sq}
+  \frac{\alpha}{4\gamma_{sq}}\mE_{{\mathcal U}_2} \max(\kappa+\sqrt{1-0}\u_1^{(2,2)},0)^2.
  \end{align}
One then easily finds $\gamma_{sq}^{(p)}=\frac{1}{2}$ and $\hat{\gamma}_{sq}=\frac{\sqrt{\alpha}}{2}\sqrt{\mE_{{\mathcal U}_2} \max(\kappa+\sqrt{1-0}\u_1^{(2,2)},0)^2}$ and
\begin{align}\label{eq:negprac20}
 - f_{sq}^{(1)}(\infty)=\bar{\psi}_{rd}(\hat{\p},\hat{\q},\hat{\c},\hat{\gamma}_{sq},\hat{\gamma}_{sq}^{(p)},1,1,-1)   & =
  -1+\sqrt{\alpha}\sqrt{\mE_{{\mathcal U}_2} \max(\kappa+\u_1^{(2,2)},0)^2}.
  \end{align}
To obtain the critical $\alpha_c^{(1)}$ as a function of $\kappa$, we rely on condition $f_{sq}^{(1)}(\infty)=0$, which gives
\begin{equation}\label{eq:negprac20a0}
a_c^{(1)}(\kappa)
=  \frac{1}{\mE_{{\mathcal U}_2} \max(\kappa+\u_1^{(2,2)},0)^2}
=  \frac{1}{\lp \frac{\kappa e^{-\frac{\kappa^2}{2}}}{\sqrt{2\pi}} + \frac{(\kappa^2+1)\erfc\lp -\frac{\kappa}{\sqrt{2}} \rp}{2}  \rp}.
  \end{equation}
To get concrete values we specialize to $\kappa=-1.5$ and find
\begin{equation}\label{eq:negprac21}
\hspace{-2in}(\mbox{\bl{\textbf{first level:}}}) \qquad \qquad  a_c^{(1)}(-1.5) =
  \frac{1}{\mE_{{\mathcal U}_2} \max(-1.5+\u_1^{(2,2)},0)^2}
\rightarrow  \bl{\mathbf{43.77}}.
  \end{equation}

\subsubsection{$r=2$ -- second level of lifting}
\label{sec:secondlev}

We split the second level of lifting into two separate parts: (i) \emph{partial} second level of lifting; and (ii) \emph{full} second level of lifting.

\subsubsubsection{Partial second level of lifting}
\label{sec:secondlevpar}

When $r=2$ and the partial lifting is considered, then (similarly to the first level)  $\hat{\p}_1\rightarrow 1$ and $\hat{\q}_1\rightarrow 1$, $\hat{\p}_{2}=\hat{\q}_{2}=0$, and $\hat{\p}_{r+1}=\hat{\p}_{3}=\hat{\q}_{r+1}=\hat{\q}_{3}=0$ but in general  $\hat{\c}_{2}\neq 0$. As above, one again has
\begin{align}\label{eq:negprac22}
    \bar{\psi}_{rd}(\hat{\p},\hat{\q},\c,\gamma_{sq},\gamma_{sq}^{(p)},1,1,-1)   & =   \frac{1}{2}
\c_2
 -\gamma_{sq}^{(p)}  - \frac{1}{\c_2}\log\lp \mE_{{\mathcal U}_2} e^{\c_2\frac{\lp\sqrt{1-0}\h_1^{(2)}\rp^2}{4\gamma_{sq}^{(p)}}}\rp \nonumber \\
 &
\quad  + \gamma_{sq}
- \alpha\frac{1}{\c_2}\log\lp \mE_{{\mathcal U}_2} e^{-\c_2\frac{\max(\kappa+\sqrt{1-0}\u_1^{(2,2)},0)^2}{4\gamma_{sq}}}\rp \nonumber \\
& =   \frac{1}{2}
\c_2
      -\gamma_{sq}^{(p)}  +\frac{1}{2\c_2}\log\lp \frac{2\gamma_{sq}^{(p)}-\c_2}{2\gamma_{sq}^{(p)}}\rp \nonumber \\
      &\quad
  + \gamma_{sq}
- \alpha\frac{1}{\c_2}\log\lp \mE_{{\mathcal U}_2} e^{-\c_2\frac{\max(\kappa+\sqrt{1-0}\u_1^{(2,2)},0)^2}{4\gamma_{sq}}}\rp. \nonumber \\
    \end{align}
Solving the integrals gives
\begin{eqnarray}\label{eq:negprac22a0}
\bar{h} & = & -\kappa    \nonumber \\
\bar{B} & = & \frac{\c_2}{4\gamma_{sq}} \nonumber \\
\bar{C} & = & \kappa \nonumber \\
f_{(zd)}& = & \frac{e^{-\frac{\bar{B}\bar{C}^2}{2\bar{B} + 1}}}{2\sqrt{2\bar{B} + 1}}\erfc\lp\frac{\bar{h}}{\sqrt{4\bar{B} + 2}}\rp  \nonumber \\
f_{(zu)}& = & \frac{1}{2}\erfc\lp-\frac{\bar{h}}{\sqrt{2}}\rp,
   \end{eqnarray}
 and
\begin{equation}\label{eq:negprac22a1}
   \mE_{{\mathcal U}_2} e^{-\c_2\frac{\max(\kappa+\sqrt{1-0}\u_1^{(2,2)},0)^2}{4\gamma_{sq}}}=f_{(zd)} + f_{(zu)}.
   \end{equation}
Differentiation  (optimization) with respect to $\gamma_{sq}^{(p)}$, $\gamma_{sq}$, and $\c_2$ brings two different scenarios for concrete optimal parameter values that are distinguished based on the value of $\kappa$.

\noindent (\textbf{\emph{i}}) For $\kappa\geq \kappa_c\approx -0.622$, we find $\hat{\c}_2\rightarrow 0$, $\hat{\gamma}_{sq}^{(p)}=\frac{1}{2}$, and $\hat{\gamma}_{sq}=\frac{\sqrt{\alpha}}{2}\sqrt{\mE_{{\mathcal U}_2} \max(\kappa+\sqrt{1-0}\u_1^{(2,2)},0)^2}$. In other words, when $\kappa\geq \kappa_c\approx -0.622$,  one uncovers the first level of lifting with $a_c^{(2,p)}$ as in (\ref{eq:negprac21}), i.e., with
\begin{equation}\label{eq:negprac22a1a0}
a_c^{(2,p)} =
a_c^{(1)} =
  \frac{1}{\mE_{{\mathcal U}_2} \max(\kappa+\u_1^{(2,2)},0)^2}
  = \frac{1}{\lp \frac{\kappa e^{-\frac{\kappa^2}{2}}}{\sqrt{2\pi}} + \frac{(\kappa^2+1)\erfc\lp -\frac{\kappa}{\sqrt{2}} \rp}{2}  \rp}.
  \end{equation}

\noindent (\textbf{\emph{ii}}) For $\kappa\leq \kappa_c$, after computing the derivatives with respect to $\gamma_{sq}^{(p)}$, $\gamma_{sq}$, and $\c_2$ and equalling them to zero, we obtain for, say, $\kappa=-1.5$ \begin{equation}\label{eq:negprac23}
\hspace{-2in}(\mbox{\bl{\textbf{\emph{partial} second level:}}}) \qquad \qquad  a_c^{(2,p)}(-1.5)  \approx \bl{\mathbf{37.36}}.
  \end{equation}

\subsubsubsection{Full second level of lifting}
\label{sec:secondlevfull}

The setup presented above can also be utilized for the full lifting on the second level. However, one has to be careful since now (in addition to $\hat{\c}_{2}\neq 0$) one, in general,  also has $\p_2\neq0$ and $\q_2\neq0$. Analogously to (\ref{eq:negprac22}), we now write
\begin{eqnarray}\label{eq:negprac24}
    \bar{\psi}_{rd}(\p,\q,\c,\gamma_{sq},\gamma_{sq}^{(p)},1,1,-1)   & = &  \frac{1}{2}
(1-\p_2\q_2)\c_2
-  \gamma_{sq}^{(p)}  - \frac{1}{\c_2}\mE_{{\mathcal U}_3}\log\lp \mE_{{\mathcal U}_2} e^{\c_2\frac{\lp\sqrt{1-\q_2}\h_1^{(2)} +\sqrt{\q_2}\h_1^{(3)} \rp^2}{4 \gamma_{sq}^{(p)}}}\rp \nonumber \\
& &   + \gamma_{sq}
 -\alpha\frac{1}{\c_2}\mE_{{\mathcal U}_3} \log\lp \mE_{{\mathcal U}_2} e^{-\c_2\frac{\max(\sqrt{1-\p_2}\u_1^{(2,2)}+\sqrt{\p_2}\u_1^{(2,3)},0)^2}{4\gamma_{sq}}}\rp \nonumber \\
 & = &  \frac{1}{2}
(1-\p_2\q_2)\c_2
 -  \gamma_{sq}^{(p)}
-\Bigg(\Bigg. -\frac{1}{2\c_2} \log \lp \frac{2\gamma_{sq}-\c_2(1-\q_2)}{2\gamma_{sq}} \rp  \nonumber \\
 & & +  \frac{\q_2}{2(2\gamma_{sq}-\c_2(1-\q_2))}   \Bigg.\Bigg)
 \nonumber \\
& &   + \gamma_{sq}
 -\alpha\frac{1}{\c_2}\mE_{{\mathcal U}_3} \log\lp \mE_{{\mathcal U}_2} e^{-\c_2\frac{\max(\sqrt{1-\p_2}\u_1^{(2,2)}+\sqrt{\p_2}\u_1^{(2,3)},0)^2}{4\gamma_{sq}}}\rp.
    \end{eqnarray}
After solving the remaining integrals, we also have
\begin{eqnarray}\label{eq:negprac24a2}
\hat{h} & = &  -\frac{\sqrt{\p_2}\u_1^{(2,3)}+\kappa}{\sqrt{1-\p_2}}    \nonumber \\
\hat{B} & = & \frac{\c_2}{4\gamma_{sq}} 
\nonumber \\
\hat{C} & = & \sqrt{\p_2}\u_1^{(2,3)}+\kappa \nonumber \\
f_{(zd)}^{(2,f)}& = & \frac{e^{-\frac{\hat{B}\hat{C}^2}{2(1-\p_2)\hat{B} + 1}}}{2\sqrt{2(1-\p_2)\hat{B} + 1}}
\erfc\lp\frac{\hat{h}}{\sqrt{4(1-\p_2)\hat{B} + 2}}\rp
\nonumber \\
f_{(zu)}^{(2,f)}& = & \frac{1}{2}\erfc\lp-\frac{\hat{h}}{\sqrt{2}}\rp,  
\nonumber \\
f_{(zt)}^{(2,f)}& = & f_{(zd)}^{(2,f)}+f_{(zu)}^{(2,f)}.
   \end{eqnarray}
and
\begin{eqnarray}\label{eq:negprac24a3}
   \mE_{{\mathcal U}_3} \log\lp \mE_{{\mathcal U}_2} e^{-\c_2\frac{\max(\sqrt{1-\p_2}\u_1^{(2,2)}+\sqrt{\p_2}\u_1^{(2,3)},0)^2}{4\gamma_{sq}}}\rp
=   \mE_{{\mathcal U}_3} \log\lp f_{(zt)}^{(2,f)} \rp.
    \end{eqnarray}
One now needs to compute \emph{five} derivatives with respect to $\q_2$, $\p_2$, $\c_2$, $\gamma_{sq}$,, and $\gamma_{sq}^{(p)}$. We systematically compute each of them.

\vspace{.1in}
\noindent \underline{\red{\textbf{(i) $\q_2$ -- derivative:}}} We start by writing
\begin{eqnarray}\label{eq:2levder1}
   \frac{d\bar{\psi}_{rd}(\p,\q,\c,\gamma_{sq},\gamma_{sq}^{(p)},1,1,-1) }{d\q_2}
 & = &  -\frac{1}{2}
\p_2\c_2
-\Bigg(\Bigg.  -\frac{1}{2((2\gamma_{sq}^{(p)}-\c_2(1-\q_2)))}+\frac{1}{2((2\gamma_{sq}^{(p)}-\c_2(1-\q_2)))}\nonumber \\
& & -\frac{\q_2}{2(2\gamma_{sq}^{(p)}-\c_2(1-\q_2))^2}\c_2
  \Bigg.\Bigg)
 \nonumber \\
 & = &  -\frac{1}{2}
\p_2\c_2
+\frac{\q_2}{2(2\gamma_{sq}^{(p)}-\c_2(1-\q_2))^2}\c_2 \nonumber \\
 & = &  \c_2\lp -\frac{1}{2}
\p_2
+\frac{\q_2}{2(2\gamma_{sq}^{(p)}-\c_2(1-\q_2))^2}\rp.
     \end{eqnarray}

\vspace{.1in}
\noindent \underline{\red{\textbf{(ii) $\p_2$ -- derivative:}}} As above, we start by writing
\begin{eqnarray}\label{eq:2levder2}
   \frac{d\bar{\psi}_{rd}(\p,\q,\c,\gamma_{sq},\gamma_{sq}^{(p)},1,1,-1) }{d\p_2}
 & = &  -\frac{1}{2}
\q_2\c_2
- \frac{\alpha}{\c_2}\frac{d\lp \mE_{{\mathcal U}_3} \log\lp f_{(zt)}^{(2,f)} \rp \rp}{d\p_2} \nonumber \\
 & = &  -\frac{1}{2}
\q_2\c_2
- \frac{\alpha}{\c_2}  \mE_{{\mathcal U}_3} \lp \frac{1}{f_{(zt)}^{(2,f)}} \frac{d\lp f_{(zt)}^{(2,f)}\rp}{d\p_2} \rp.
     \end{eqnarray}
From (\ref{eq:negprac24a2}), we then have
\begin{eqnarray}\label{eq:2levder3}
 \frac{df_{(zt)}^{(2,f)}}{d\p_2}& = &  \frac{df_{(zd)}^{(2,f)}}{d\p_2}+ \frac{df_{(zu)}^{(2,f)}}{d\p_2}.
   \end{eqnarray}
Moreover, we also have
\begin{eqnarray}\label{eq:2levder4}
 \frac{df_{(zu)}^{(2,f)}}{d\p_2} =\frac{e^{-\frac{\hat{h}^2}{2}}}{\sqrt{2\pi}}\frac{d\hat{h}}{d\p_2},
   \end{eqnarray}
and
\begin{eqnarray}\label{eq:2levder5}
\frac{d\hat{h}}{d\p_2}
=-\frac{\u_1^{(2,3)}}{2\sqrt{\p_2}\sqrt{1-\p_2}}-\frac{(\sqrt{\p_2}\u_1^{(2,3)}+\kappa)}{2\sqrt{1-\p_2}^3}.
   \end{eqnarray}
A combination of  (\ref{eq:2levder4}) and (\ref{eq:2levder5}) gives
\begin{eqnarray}\label{eq:2levder6}
 \frac{df_{(zu)}^{(2,f)}}{d\p_2}
 =\frac{e^{-\frac{\hat{h}^2}{2}}}{\sqrt{2\pi}}\frac{d\hat{h}}{d\p_2}
=\frac{e^{-\frac{\hat{h}^2}{2}}}{\sqrt{2\pi}}
\lp-\frac{\u_1^{(2,3)}}{2\sqrt{\p_2}\sqrt{1-\p_2}}-\frac{(\sqrt{\p_2}\u_1^{(2,3)}+\kappa)}{2\sqrt{1-\p_2}^3}\rp.
   \end{eqnarray}
   After observing
\begin{eqnarray}\label{eq:2levder7}
 \frac{d\hat{C}}{d\p_2} =\frac{\u_1^{(2,3)}}{2\sqrt{\p_2}},
   \end{eqnarray}
 we can further write
\begin{eqnarray}\label{eq:2levder8}
 \frac{df_{(zd)}^{(2,f)}}{d\p_2} =f_{(d\p)}^{(1)}+f_{(d\p)}^{(2)}+f_{(d\p)}^{(3)},
   \end{eqnarray}
 where
\begin{eqnarray}\label{eq:2levder9}
f_{(d\p)}^{(1)}=\lp-\frac{\hat{B} \hat{C}\u_1^{(2,3)}}{\sqrt{\p_2}(2(1-\p_2)\hat{B} + 1)}-\frac{2\hat{B}^2\hat{C}^2}{(2(1-\p_2)\hat{B} + 1).^2}\rp
e^{-\frac{\hat{B}\hat{C}^2}{2(1-\p_2)\hat{B} + 1}}
\frac{\erfc\lp\frac{\hat{h}}{\sqrt{4(1-\p_2)\hat{B} + 2}}\rp}{2\sqrt{2(1-\p_2)\hat{B} + 1}},
   \end{eqnarray}
   and
\begin{equation}\label{eq:2levder10}
f_{(d\p)}^{(2)}=\frac{e^{-\frac{\hat{B}\hat{C}^2}{2(1-\p_2)\hat{B} + 1}}}{2\sqrt{2(1-\p_2)\hat{B} + 1}}
\lp -\frac{2}{\sqrt{\pi}}\lp
\frac{1}{\sqrt{4(1-\p_2)\hat{B} + 2}}\frac{d\hat{h}}{d\p_2}
+\frac{2\hat{B}\hat{h}}{\sqrt{4(1-\p_2)\hat{B} + 2}^3}\rp
e^{-\lp\frac{\hat{h}}{\sqrt{4(1-\p_2)\hat{B} + 2}}\rp^2}\rp,
   \end{equation}
and
\begin{eqnarray}\label{eq:2levder11}
f_{(d\p)}^{(3)}=\frac{\hat{B}e^{-\frac{\hat{B}\hat{C}^2}{2(1-\p_2)\hat{B} + 1}}}{2\sqrt{2(1-\p_2)\hat{B} + 1}^3}
\erfc\lp\frac{\hat{h}}{\sqrt{4(1-\p_2)\hat{B} + 2}}\rp.
   \end{eqnarray}
A combination of (\ref{eq:negprac24a2}), (\ref{eq:2levder2}), (\ref{eq:2levder3}), (\ref{eq:2levder6}), and (\ref{eq:2levder8})-(\ref{eq:2levder11}) is then sufficient to determine $\p_2$--derivative.

\vspace{.1in}

\noindent \underline{\red{\textbf{(iii) $\c_2$ -- derivative:}}} We again start by writing
\begin{eqnarray}\label{eq:2levder12}
   \frac{d\bar{\psi}_{rd}(\p,\q,\c,\gamma_{sq},\gamma_{sq}^{(p)},1,1,-1) }{d\c_2}
 & = &  \frac{1}{2}(1-
\p_2\q_2)
 -\frac{1}{2\c_2^2} \log \lp \frac{2\gamma_{sq}^{(p)}-\c_2(1-\q_2)}{2\gamma_{sq}^{(p)}} \rp \nonumber \\
 & &
 -\frac{1-\q_2}{2\c_2(2\gamma_{sq}^{(p)}-\c_2(1-\q_2))}    -  \frac{\q_2(1-\q_2)}{2(2\gamma_{sq}^{(p)}-\c_2(1-\q_2))^2}
 \nonumber \\
& &  + \frac{\alpha}{\c_2^2}  \mE_{{\mathcal U}_3} \log\lp f_{(zt)}^{(2,f)} \rp
- \frac{\alpha}{\c_2}  \mE_{{\mathcal U}_3} \lp \frac{1}{f_{(zt)}^{(2,f)}} \frac{d\lp f_{(zt)}^{(2,f)}\rp}{d\c_2} \rp.
      \end{eqnarray}
From (\ref{eq:negprac24a2}), we then have
\begin{eqnarray}\label{eq:2levder13}
 \frac{df_{(zt)}^{(2,f)}}{d\c_2}& = &  \frac{df_{(zd)}^{(2,f)}}{d\c_2}+ \frac{df_{(zu)}^{(2,f)}}{d\c_2}=\frac{df_{(zd)}^{(2,f)}}{d\c_2},
   \end{eqnarray}
where we utilized the fact that
\begin{eqnarray}\label{eq:2levder14}
 \frac{df_{(zu)}^{(2,f)}}{d\c_2} =\frac{e^{-\frac{\hat{h}^2}{2}}}{\sqrt{2\pi}}\frac{d\hat{h}}{d\c_2}=0.
   \end{eqnarray}
    After observing
\begin{eqnarray}\label{eq:2levder17}
 \frac{d\hat{B}}{d\c_2} =\frac{1}{4\gamma_{sq}}\quad \mbox{and} \quad  \frac{d\hat{C}}{d\c_2} =0,
   \end{eqnarray}
 we can further write
\begin{eqnarray}\label{eq:2levder18}
 \frac{df_{(zd)}^{(2,f)}}{d\c_2} =f_{(d\c)}^{(1)}+f_{(d\c)}^{(2)}+f_{(d\c)}^{(3)},
   \end{eqnarray}
 where
\begin{eqnarray}\label{eq:2levder19}
f_{(d\c)}^{(1)}=\lp-\frac{\hat{C}^2}{4\gamma_{sq}(2(1-\p_2)\hat{B} + 1)}+\frac{(1-\p_2)\hat{B}\hat{C}^2}{2\gamma_{sq}(2(1-\p_2)\hat{B} + 1).^2}\rp
e^{-\frac{\hat{B}\hat{C}^2}{2(1-\p_2)\hat{B} + 1}}
\frac{\erfc\lp\frac{\hat{h}}{\sqrt{4(1-\p_2)\hat{B} + 2}}\rp}{2\sqrt{2(1-\p_2)\hat{B} + 1}},
   \end{eqnarray}
   and
\begin{equation}\label{eq:2levder20}
f_{(d\c)}^{(2)}=\frac{e^{-\frac{\hat{B}\hat{C}^2}{2(1-\p_2)\hat{B} + 1}}}{2\sqrt{2(1-\p_2)\hat{B} + 1}}
\lp -\frac{2}{\sqrt{\pi}}\lp
 -\frac{(1-\p_2)\hat{h}}{2\gamma_{sq}\sqrt{4(1-\p_2)\hat{B} + 2}^3}\rp
e^{-\lp\frac{\hat{h}}{\sqrt{4(1-\p_2)\hat{B} + 2}}\rp^2}\rp,
   \end{equation}
and
\begin{eqnarray}\label{eq:2levder21}
f_{(d\c)}^{(3)}=-\frac{(1-\p_2)e^{-\frac{\hat{B}\hat{C}^2}{2(1-\p_2)\hat{B} + 1}}}{8\gamma_{sq}\sqrt{2(1-\p_2)\hat{B} + 1}^3}
\erfc\lp\frac{\hat{h}}{\sqrt{4(1-\p_2)\hat{B} + 2}}\rp,
   \end{eqnarray}
A combination of (\ref{eq:negprac24a2}), (\ref{eq:2levder12}), (\ref{eq:2levder13}), and (\ref{eq:2levder18})-(\ref{eq:2levder21}) is then sufficient to determine $\c_2$--derivative.

\vspace{.1in}

\noindent \underline{\red{\textbf{(iv) $\gamma_{sq}^{(p)}$ -- derivative:}}} We easily find
\begin{eqnarray}\label{eq:2levder21a0}
   \frac{d\bar{\psi}_{rd}(\p,\q,\c,\gamma_{sq},\gamma_{sq}^{(p)},1,1,-1) }{d\gamma_{sq}^{(p)}}
  &  =  &   -1-\lp -\frac{1}{\c_2(2\gamma_{sq}^{(p)}-\c_2(1-\q_2))}+\frac{1}{2\c_2\gamma_{sq}^{(p)}}-\frac{\q_2}{(2\gamma_{sq}^{(p)}-\c_2(1-\q_2))^2}\rp \nonumber \\
   & = &    -1-\lp -\frac{1-\q_2}{2\gamma_{sq}^{(p)}(2\gamma_{sq}^{(p)}-\c_2(1-\q_2))}-\frac{\q_2}{(2\gamma_{sq}^{(p)}-\c_2(1-\q_2))^2}\rp. \nonumber \\
\end{eqnarray}

\vspace{.1in}

\noindent \underline{\red{\textbf{(v) $\gamma_{sq}$ -- derivative:}}} We first write
\begin{equation}\label{eq:2levder22}
   \frac{d\bar{\psi}_{rd}(\p,\q,\c,\gamma_{sq},\gamma_{sq}^{(p)},1,1,-1) }{d\gamma_{sq}}
=
 1
- \frac{\alpha}{\c_2}  \mE_{{\mathcal U}_3} \lp \frac{1}{f_{(zt)}^{(2,f)}} \frac{d\lp f_{(zt)}^{(2,f)}\rp}{d\gamma_{sq}} \rp.
 \end{equation}
Relying on (\ref{eq:negprac24a2}), we also have
\begin{eqnarray}\label{eq:2levder23}
 \frac{df_{(zt)}^{(2,f)}}{d\gamma_{sq}}& = &  \frac{df_{(zd)}^{(2,f)}}{d\gamma_{sq}}+ \frac{df_{(zu)}^{(2,f)}}{d\gamma_{sq}}
 =\frac{df_{(zd)}^{(2,f)}}{d\gamma_{sq}},
   \end{eqnarray}
where we utilized
\begin{eqnarray}\label{eq:2levder24}
 \frac{df_{(zu)}^{(2,f)}}{d\gamma_{sq}} =\frac{e^{-\frac{\hat{h}^2}{2}}}{\sqrt{2\pi}}\frac{d\hat{h}}{d\gamma_{sq}}=0.
   \end{eqnarray}
    After observing
\begin{eqnarray}\label{eq:2levder27}
 \frac{d\hat{h}}{d\gamma_{sq}} =\frac{d\hat{C}}{d\gamma_{sq}} =0 \quad \mbox{and}\quad
  \frac{d\hat{B}}{d\gamma_{sq}} =-\frac{\c_2}{4\gamma_{sq}^2},
   \end{eqnarray}
 we can further write
\begin{eqnarray}\label{eq:2levder28}
 \frac{df_{(zd)}^{(2,f)}}{d\gamma_{sq}} =f_{(d\gamma)}^{(1)}+f_{(d\gamma)}^{(2)}+f_{(d\gamma)}^{(3)},
   \end{eqnarray}
 where
\begin{eqnarray}\label{eq:2levder29}
f_{(d\gamma)}^{(1)}=\lp \frac{\c_2\hat{C}^2}{4\gamma_{sq}^2(2(1-\p_2)\hat{B} + 1)}-\frac{\c_2(1-\p_2)\hat{B}\hat{C}^2}{2\gamma_{sq}^2(2(1-\p_2)\hat{B} + 1).^2}\rp
e^{-\frac{\hat{B}\hat{C}^2}{2(1-\p_2)\hat{B} + 1}}
\frac{\erfc\lp\frac{\hat{h}}{\sqrt{4(1-\p_2)\hat{B} + 2}}\rp}{2\sqrt{2(1-\p_2)\hat{B} + 1}},
   \end{eqnarray}
   and
\begin{equation}\label{eq:2levder30}
f_{(d\gamma)}^{(2)}=\frac{e^{-\frac{\hat{B}\hat{C}^2}{2(1-\p_2)\hat{B} + 1}}}{2\sqrt{2(1-\p_2)\hat{B} + 1}}
\lp -\frac{2}{\sqrt{\pi}}\lp
 \frac{\c_2(1-\p_2)\hat{h}}{2\gamma_{sq}^2\sqrt{4(1-\p_2)\hat{B} + 2}^3}\rp
e^{-\lp\frac{\hat{h}}{\sqrt{4(1-\p_2)\hat{B} + 2}}\rp^2}\rp,
   \end{equation}
and
\begin{eqnarray}\label{eq:2levder31}
f_{(d\gamma)}^{(3)}=\frac{\c_2(1-\p_2)e^{-\frac{\hat{B}\hat{C}^2}{2(1-\p_2)\hat{B} + 1}}}{8\gamma_{sq}^2\sqrt{2(1-\p_2)\hat{B} + 1}^3}
\erfc\lp\frac{\hat{h}}{\sqrt{4(1-\p_2)\hat{B} + 2}}\rp,
   \end{eqnarray}
An easy combination of (\ref{eq:negprac24a2}), (\ref{eq:2levder22}), (\ref{eq:2levder23}), and (\ref{eq:2levder28})-(\ref{eq:2levder31}) ensures that all the ingredients needed to determine $\gamma_{sq}$--derivative are obtained. After solving the following system
\begin{eqnarray}\label{eq:2levder32}
  \frac{d\bar{\psi}_{rd}(\p,\q,\c,\gamma_{sq},\gamma_{sq}^{(p)},1,1,-1) }{d\q_2}
 & = &  0\nonumber \\
 \frac{d\bar{\psi}_{rd}(\p,\q,\c,\gamma_{sq},\gamma_{sq}^{(p)},1,1,-1) }{d\p_2}
 & = &  0 \nonumber \\
 \frac{d\bar{\psi}_{rd}(\p,\q,\c,\gamma_{sq},\gamma_{sq}^{(p)},1,1,-1) }{d\c_2}
 & = &  0\nonumber \\
 \frac{d\bar{\psi}_{rd}(\p,\q,\c,\gamma_{sq},\gamma_{sq}^{(p)},1,1,-1) }{d\gamma_{sq}^{(p)}}
 & = &  0\nonumber \\
 \frac{d\bar{\psi}_{rd}(\p,\q,\c,\gamma_{sq},\gamma_{sq}^{(p)},1,1,-1) }{d\gamma_{sq}}
 & = &  0,
      \end{eqnarray}
and denoting by $\hat{\q}_2,\hat{\p}_2,\hat{\c}_2,\hat{\gamma}_{sq}^{(p)},\hat{\gamma}_{sq}$  the obtained solution, we utilize
\begin{align}\label{eq:2levder33}
 - f_{sq}^{(2)}(\infty)=\bar{\psi}_{rd}(\hat{\p},\hat{\q},\hat{\c},\hat{\gamma}_{sq},\hat{\gamma}_{sq}^{(p)},1,1,-1)   & =
  0,
  \end{align}
to determine the critical $\alpha_c(\kappa)$, for any given $\kappa$. For example, taking $\kappa=-1.5$, we find
\begin{equation}\label{eq:2levder34}
\hspace{-2.5in}(\mbox{\bl{\textbf{\emph{full} second level:}}}) \qquad \qquad  a_c^{(2,f)}(-1.5)
\approx  \bl{\mathbf{36.57}}.
  \end{equation}

\vspace{.1in}

\noindent \underline{\textbf{\emph{Closed form relations:}}} To handle the above system we found as useful to utilize the following helpful, closed form, relations. First from (\ref{eq:2levder1}), we find
\begin{eqnarray}\label{eq:helprel1}
 \p_2=\frac{\q_2}{(2\gamma_{sq}^{(p)}-\c_2(1-\q_2))^2}.
     \end{eqnarray}
From (\ref{eq:2levder21a0}), we further have
\begin{eqnarray}\label{eq:helprel2}
     1=\frac{1-\q_2}{2\gamma_{sq}^{(p)}(2\gamma_{sq}^{(p)}-\c_2(1-\q_2))}+\frac{\q_2}{(2\gamma_{sq}^{(p)}-\c_2(1-\q_2))^2}.
\end{eqnarray}
Combining (\ref{eq:helprel1}) and (\ref{eq:helprel2}), we obtain
\begin{eqnarray}\label{eq:helprel3}
     1=\frac{1-\q_2}{2\gamma_{sq}^{(p)}(2\gamma_{sq}^{(p)}-\c_2(1-\q_2))}+\frac{\q_2}{(2\gamma_{sq}^{(p)}-\c_2(1-\q_2))^2}
     =\frac{1-\q_2}{2\gamma_{sq}^{(p)}(2\gamma_{sq}^{(p)}-\c_2(1-\q_2))}+\p_2.
\end{eqnarray}
A further combination of (\ref{eq:helprel1}) and (\ref{eq:helprel3}) gives
\begin{eqnarray}\label{eq:helprel4}
      \gamma_{sq}^{(p)} =\frac{1-\q_2}{2(1-\p_2)(2\gamma_{sq}^{(p)}-\c_2(1-\q_2))}
      =\frac{1}{2}\frac{1-\q_2}{1-\p_2}\sqrt{\frac{\p_2}{\q_2}}.
\end{eqnarray}
Also, from (\ref{eq:helprel1}) we have
\begin{eqnarray}\label{eq:helprel5}
\c_2(1-\q_2) =2\gamma_{sq}^{(p)}-\sqrt{\frac{\q_2}{\p_2}}.
     \end{eqnarray}
A combination of (\ref{eq:helprel4}) and (\ref{eq:helprel5}) gives
\begin{eqnarray}\label{eq:helprel6}
\c_2(1-\q_2) =2\gamma_{sq}^{(p)}-\sqrt{\frac{\q_2}{\p_2}}=\frac{1-\q_2}{1-\p_2}\sqrt{\frac{\p_2}{\q_2}}-\sqrt{\frac{\q_2}{\p_2}},
     \end{eqnarray}
and
\begin{eqnarray}\label{eq:helprel7}
\c_2 =\frac{1}{1-\p_2}\sqrt{\frac{\p_2}{\q_2}}-\frac{1}{1-\q_2}\sqrt{\frac{\q_2}{\p_2}}.
     \end{eqnarray}

\noindent \underline{\textbf{\emph{Concrete numerical values:}}}  In  Table \ref{tab:tab1}, we complement $a_c^{(2,f)}(-1.5)$ with the concrete values of all the relevant quantities related to the second \emph{full} (2-sfl RDT) level of lifting. To enable a systematic view of the lifting progress,  we, in parallel, show the same quantities for the first \emph{full} (1-sfl RDT) and the second \emph{partial} (2-spf RDT) level.
\begin{table}[h]
\caption{$r$-sfl RDT parameters; \emph{negative} spherical perceptron capacity;  $\hat{\c}_1\rightarrow 1$; $\kappa=-1.5$; $n,\beta\rightarrow\infty$}\vspace{.1in}
\centering
\def\arraystretch{1.2}
\begin{tabular}{||l||c|c||c|c||c|c||c||c||}\hline\hline
 \hspace{-0in}$r$-sfl RDT                                             & $\hat{\gamma}_{sq}$    & $\hat{\gamma}_{sq}^{(p)}$    &  $\hat{\p}_2$ & $\hat{\p}_1`$     & $\hat{\q}_2$  & $\hat{\q}_1$ &  $\hat{\c}_2$    & $\alpha_c^{(r)}(-1.5)$  \\ \hline\hline
$1$-sfl RDT                                      & $0.5$ & $0.5$ &  $0$  & $\rightarrow 1$   & $0$ & $\rightarrow 1$
 &  $\rightarrow 0$  & \bl{$\mathbf{43.77}$} \\ \hline\hline
 $2$-spl RDT                                      & $0.1737$ & $1.4397$ &  $0$ & $\rightarrow 1$ &  $0$ & $\rightarrow 1$ &   $2.5320$   & \bl{$\mathbf{37.36}$} \\ \hline
  $2$-sfl RDT                                      & $0.1324$  & $1.8884$  & $0.4747$ & $\rightarrow 1$ &  $0.0981$ & $\rightarrow 1$
 &  $3.6835$   & \bl{$\mathbf{36.57}$}  \\ \hline\hline
  \end{tabular}
\label{tab:tab1}
\end{table}
In Table \ref{tab:tab2}, we show the key, second level of lifting, parameters over a range of $\kappa$.
\begin{table}[h]
\caption{$2$-sfl RDT parameters; \emph{negative} spherical perceptron capacity $\alpha=\alpha_c^{(2,f)}(\kappa)$}\vspace{.1in}
\centering
\def\arraystretch{1.2}
\begin{tabular}{||l||c|c|c|c|c|c|c|c|c|c||}\hline\hline
 \hspace{-0in}$\kappa$                                             & $\mathbf{-2.7}$    & $\mathbf{-2.3}$ & $\mathbf{-2.0}$ & $\mathbf{-1.7}$        & $\mathbf{-1.5}$  & $\mathbf{-1.3}$   & $\mathbf{-1}$  &  $\mathbf{-0.7}$    & $\mathbf{-0.5}$    & $\mathbf{-0.3}$  \\ \hline\hline
$\hat{\gamma}_{sq}$                                       & $ 0.0739$ & $ 0.0925$ & $0.1086$ & $0.1238$ & $0.1324$ & $  0.1403 $ & $0.1500$ & $ 0.1575 $ & $ 0.1609$ & $0.1631$
 \\ \hline
$\hat{\gamma}_{sq}^{(p)}$                                       & $3.3827$ & $2.7015$ & $2.3033$ & $2.0219$ & $1.8884$ & $ 1.7823 $ & $1.6660$ & $1.5869$ & $1.5542$ & $1.5339$
 \\ \hline
$\hat{\p}_2$                                       & $0.0560$ & $0.1503$ & $0.2664$ & $0.3934$ & $0.4747$ & $ 0.5520 $ & $0.6649$ & $0.7781$ & $ 0.8550$ & $0.9293$
 \\ \hline
$\hat{\q}_{2}$\hspace{-.08in}                                      & $0.0014  $  & $0.0070   $  & $ 0.0223  $  & $ 0.0580  $  & $ 0.0981  $  & $  0.1547  $  & $   0.2780 $  & $  0.4591  $  & $ 0.6170 $  & $   0.7990  $  \\ \hline
$\hat{\c}_2$                                       & $6.6181$ & $5.2234$ & $4.4157$ & $3.8852$ & $3.6835$ & $ 3.5906 $ & $3.7194$ & $4.4476$ & $5.8985$ & $10.650$
 \\ \hline\hline
 $\alpha$                                      & $\bl{\mathbf{942.9}}  $ & $ \bl{\mathbf{ 284.5 }} $ & $  \bl{\mathbf{125.43}} $ & $  \bl{\mathbf{ 58.80 }} $ & $ \bl{\mathbf{ 36.57 }}$ & $  \bl{\mathbf{ 23.29}}  $ & $ \bl{\mathbf{ 12.32}} $ & $  \bl{\mathbf{ 6.817 }} $ & $ \bl{\mathbf{ 4.701 }}$ & $   \bl{\mathbf{3.298 }}$ \\ \hline\hline
 \end{tabular}
\label{tab:tab2}
\end{table}
The progression of the capacity as the level of lifting increases is shown in Table \ref{tab:tab3}.
\begin{table}[h]
\caption{\emph{Negative} spherical  perceptron capacity $\alpha_c(\kappa)$ --- early progression of $r$-sfl RDT mechanism}\vspace{.1in}
\centering
\def\arraystretch{1.2}
\begin{tabular}{||l||c|c|c|c||}\hline\hline
 \hspace{-0in}$\kappa$                                               & $\mathbf{-2.0}$ & $\mathbf{-1.5}$ & $\mathbf{-1}$        & $\mathbf{-0.5}$    \\ \hline\hline
$\alpha_c^{(1,f)}(\kappa)$                    & $  173.4 $  & $   43.77  $  & $ 13.27 $  & $   4.770 $
 \\ \hline
$\alpha_c^{(2,p)}(\kappa)$                                      & $  126.2 $  & $   37.36  $  & $ 12.78 $  & $   4.770 $ \\ \hline \hline
 $\alpha_c^{(2,f)}(\kappa)$                                      & $ \bl{\mathbf{ 125.4 }} $ & $  \bl{\mathbf{36.57}} $ & $  \bl{\mathbf{ 12.32 }} $ & $ \bl{\mathbf{ 4.701 }}$   \\ \hline\hline
 \end{tabular}
\label{tab:tab3}
\end{table}

\subsubsection{$r=3$ -- third level of lifting}
\label{sec:thirdlev}

Since we have already seen the main idea behind the partial lifting, we here immediately  consider \emph{full} third level of lifting. For $r=3$, we have that $\hat{\p}_1\rightarrow 1$ and $\hat{\q}_1\rightarrow 1$  as well as  $\hat{\p}_{r+1}=\hat{\p}_{4}=\hat{\q}_{r+1}=\hat{\q}_{4}=0$.  Analogously to (\ref{eq:negprac22}), we now write
{\small\begin{align}\label{eq:3negprac24}
    \bar{\psi}_{rd}(\p,\q,\c,\gamma_{sq},\gamma_{sq}^{(p)},1,1,-1)   & =   \frac{1}{2}
(1-\p_2\q_2)\c_2+ \frac{1}{2}
(\p_2\q_2-\p_3\q_3)\c_3 \nonumber \\
& \quad  -  \gamma_{sq}^{(p)}  - \frac{1}{\c_3}\mE_{{\mathcal U}_4}\log\lp \mE_{{\mathcal U}_3} \lp \mE_{{\mathcal U}_2} e^{\c_2\frac{\lp\sqrt{1-\q_2}\h_1^{(2)} +\sqrt{\q_2-\q_3}\h_1^{(3)}+\sqrt{\q_3}\h_1^{(4)} \rp^2}{4 \gamma_{sq}^{(p)}}}\rp^{\frac{\c_3}{\c_2}}\rp \nonumber \\
& \quad   + \gamma_{sq}
 -\frac{\alpha}{\c_3}\mE_{{\mathcal U}_4} \log\lp \mE_{{\mathcal U}_3} \lp \mE_{{\mathcal U}_2} e^{-\c_2\frac{\max(\sqrt{1-\p_2}\u_1^{(2,2)}+\sqrt{\p_2-\p_3}\u_1^{(2,3)}+\sqrt{\p_3}\u_1^{(2,4)},0)^2}{4\gamma_{sq}}}\rp^{\frac{\c_3}{\c_2}}\rp \nonumber \\
 & =   \frac{1}{2}
(1-\p_2\q_2)\c_2+ \frac{1}{2}
(\p_2\q_2-\p_3\q_3)\c_3 -  \gamma_{sq}^{(p)} \nonumber \\
&\quad
-\Bigg(\Bigg. -\frac{1}{2\c_2} \log \lp \frac{2\gamma_{sq}^{(p)}-\c_2(1-\q_2)}{2\gamma_{sq}^{(p)}} \rp  -\frac{1}{2\c_3} \log \lp \frac{2\gamma_{sq}^{(p)}-\c_2(1-\q_2)-\c_3(\q_2-\q_3)}{2\gamma_{sq}^{(p)}-\c_2(1-\q_2)} \rp  \nonumber \\
& \quad +  \frac{\q_3}{2(2\gamma_{sq}^{(p)}-\c_2(1-\q_2)-\c_3(\q_2-\q_3))}   \Bigg.\Bigg)
 \nonumber \\
& \quad   + \gamma_{sq}
 -\frac{\alpha}{\c_3}\mE_{{\mathcal U}_4} \log\lp \mE_{{\mathcal U}_3} \lp \mE_{{\mathcal U}_2} e^{-\c_2\frac{\max(\sqrt{1-\p_2}\u_1^{(2,2)}+\sqrt{\p_2-\p_3}\u_1^{(2,3)}+\sqrt{\p_3}\u_1^{(2,4)},0)^2}{4\gamma_{sq}}}\rp^{\frac{\c_3}{\c_2}}\rp,
    \end{align}}

\noindent where we handled the first sequence of integrals utilizing the closed form solutions obtained in \cite{Stojnichopflrdt23}. After solving the remaining integrals, we also have
\begin{eqnarray}\label{eq:3negprac24a2}
\tilde{h} & = &  -\frac{\sqrt{\p_2-\p_3}\u_1^{(2,3)}+\sqrt{\p_3}\u_1^{(2,4)}+\kappa}{\sqrt{1-\p_2}}    \nonumber \\
\tilde{B} & = & \frac{\c_2}{4\gamma_{sq}} 
\nonumber \\
\tilde{C} & = & \sqrt{\p_2-\p_3}\u_1^{(2,3)}+\sqrt{\p_3}\u_1^{(2,4)}+\kappa \nonumber \\
f_{(zd)}^{(3,f)}& = & \frac{e^{-\frac{\tilde{B}\tilde{C}^2}{2(1-\p_2)\tilde{B} + 1}}}{2\sqrt{2(1-\p_2)\tilde{B} + 1}}
\erfc\lp\frac{\tilde{h}}{\sqrt{4(1-\p_2)\tilde{B} + 2}}\rp
\nonumber \\
f_{(zu)}^{(3,f)}& = & \frac{1}{2}\erfc\lp-\frac{\tilde{h}}{\sqrt{2}}\rp,  
\nonumber \\
f_{(zt)}^{(3,f)}& = & f_{(zd)}^{(3,f)}+f_{(zu)}^{(3,f)}.
   \end{eqnarray}
and
\begin{eqnarray}\label{eq:3negprac24a3}
 \mE_{{\mathcal U}_4} \log\lp \mE_{{\mathcal U}_3} \lp \mE_{{\mathcal U}_2} e^{-\c_2\frac{\max(\sqrt{1-\p_2}\u_1^{(2,2)}+\sqrt{\p_2-\p_3}\u_1^{(2,3)}+\sqrt{\p_3}\u_1^{(2,4)},0)^2}{4\gamma_{sq}}}\rp^{\frac{\c_3}{\c_2}}\rp
=   \mE_{{\mathcal U}_4} \log\lp \mE_{{\mathcal U}_3} \lp f_{(zt)}^{(3,f)} \rp^{\frac{\c_3}{\c_2}}\rp.
    \end{eqnarray}
Combining (\ref{eq:3negprac24}) and (\ref{eq:3negprac24a3}), we obtain
\begin{eqnarray}\label{eq:3negprac24ver}
    \bar{\psi}_{rd}(\p,\q,\c,\gamma_{sq},\gamma_{sq}^{(p)},1,1,-1)
 & = &   \frac{1}{2}
(1-\p_2\q_2)\c_2+ \frac{1}{2}
(\p_2\q_2-\p_3\q_3)\c_3 -  \gamma_{sq}^{(p)} \nonumber \\
& &
-\Bigg(\Bigg. -\frac{1}{2\c_2} \log \lp \frac{2\gamma_{sq}^{(p)}-\c_2(1-\q_2)}{2\gamma_{sq}^{(p)}} \rp  \nonumber \\
& & -\frac{1}{2\c_3} \log \lp \frac{2\gamma_{sq}^{(p)}-\c_2(1-\q_2)-\c_3(\q_2-\q_3)}{2\gamma_{sq}^{(p)}-\c_2(1-\q_2)} \rp  \nonumber \\
& & +  \frac{\q_3}{2(2\gamma_{sq}^{(p)}-\c_2(1-\q_2)-\c_3(\q_2-\q_3))}   \Bigg.\Bigg)
 \nonumber \\
& &  + \gamma_{sq}
 -\frac{\alpha}{\c_3}   \mE_{{\mathcal U}_4} \log\lp \mE_{{\mathcal U}_3} \lp f_{(zt)}^{(3,f)} \rp^{\frac{\c_3}{\c_2}}\rp.
    \end{eqnarray}

\subsubsection{Third level derivatives}
\label{sec:thirdlevder}

One now needs to compute \emph{eight} derivatives with respect to $\q_3$, $\q_2$, $\p_3$, $\p_2$,, $\c_3$, $\c_2$,, $\gamma_{sq}$,, and $\gamma_{sq}^{(p)}$. We again systematically compute each of them.

\vspace{.1in}
\noindent \underline{\red{\textbf{(i) $\q_3$ -- derivative:}}} Utilizing (\ref{eq:3negprac24}) and (\ref{eq:3negprac24ver}), we have
\begin{eqnarray}\label{eq:3levder1}
   \frac{d\bar{\psi}_{rd}(\p,\q,\c,\gamma_{sq},\gamma_{sq}^{(p)},1,1,-1) }{d\q_3}
 & = &  -\frac{1}{2}
\p_2\c_3
-\Bigg(\Bigg.    -\frac{1}{2(2\gamma_{sq}^{(p)}-\c_2(1-\q_2)-\c_3(\q_2-\q_3))}    \nonumber \\
& & +  \frac{1}{2(2\gamma_{sq}^{(p)}-\c_2(1-\q_2)-\c_3(\q_2-\q_3))}   \nonumber \\
& & -  \frac{\c_3\q_3}{2(2\gamma_{sq}^{(p)}-\c_2(1-\q_2)-\c_3(\q_2-\q_3))^2}   \Bigg.\Bigg)
\nonumber \\
 & = &  -\frac{1}{2}
\p_3\c_3
+  \frac{\c_3\q_3}{2(2\gamma_{sq}^{(p)}-\c_2(1-\q_2)-\c_3(\q_2-\q_3))^2}.
      \end{eqnarray}

\vspace{.1in}

\noindent \underline{\red{\textbf{(ii) $\q_2$ -- derivative:}}} Relying further on (\ref{eq:3negprac24}) and (\ref{eq:3negprac24ver}), we also have
\begin{eqnarray}\label{eq:3levder1a1}
   \frac{d\bar{\psi}_{rd}(\p,\q,\c,\gamma_{sq},\gamma_{sq}^{(p)},1,1,-1) }{d\q_2}
 & = &  -\frac{1}{2}
\p_2(\c_2-\c_3)
-\Bigg(\Bigg. -\frac{1}{2(2\gamma_{sq}^{(p)}-\c_2(1-\q_2))}
 \nonumber \\
 & &
 -\frac{\c_2-\c_3}{2\c_3(2\gamma_{sq}^{(p)}-\c_2(1-\q_2)-\c_3(\q_2-\q_3))}
 +\frac{\c_2}{2\c_3(2\gamma_{sq}^{(p)}-\c_2(1-\q_2))}   \nonumber \\
& & -  \frac{\q_3(\c_2-\c_3)}{2(2\gamma_{sq}^{(p)}-\c_2(1-\q_2)-\c_3(\q_2-\q_3))^2}   \Bigg.\Bigg)
\nonumber \\
 & = &  -\frac{1}{2}
\p_2(\c_2-\c_3)
-\Bigg(\Bigg. \frac{\c_2-\c_3}{2\c_3(2\gamma_{sq}^{(p)}-\c_2(1-\q_2))}
 \nonumber \\
 & &
 -\frac{\c_2-\c_3}{2\c_3(2\gamma_{sq}^{(p)}-\c_2(1-\q_2)-\c_3(\q_2-\q_3))}
  \nonumber \\
& & -  \frac{\q_3(\c_2-\c_3)}{2(2\gamma_{sq}^{(p)}-\c_2(1-\q_2)-\c_3(\q_2-\q_3))^2}   \Bigg.\Bigg)
\nonumber \\
& = &  -\frac{1}{2}
\p_2(\c_2-\c_3)
-(\c_2-\c_3)\Bigg(\Bigg. \frac{1}{2\c_3(2\gamma_{sq}^{(p)}-\c_2(1-\q_2))}
 \nonumber \\
 & &
 -\frac{1}{2\c_3(2\gamma_{sq}^{(p)}-\c_2(1-\q_2)-\c_3(\q_2-\q_3))}
  \nonumber \\
& & -  \frac{\q_3}{2(2\gamma_{sq}^{(p)}-\c_2(1-\q_2)-\c_3(\q_2-\q_3))^2}   \Bigg.\Bigg) \nonumber \\
& = &
(\c_2-\c_3)\Bigg(\Bigg. -\frac{\p_2}{2}
 \nonumber \\
 & &
 +\frac{\q_2-\q_3}{2(2\gamma_{sq}^{(p)}-\c_2(1-\q_2))(2\gamma_{sq}^{(p)}-\c_2(1-\q_2)-\c_3(\q_2-\q_3))}
  \nonumber \\
& & +  \frac{\q_3}{2(2\gamma_{sq}^{(p)}-\c_2(1-\q_2)-\c_3(\q_2-\q_3))^2}   \Bigg.\Bigg).
       \end{eqnarray}

\vspace{.1in}

\noindent \underline{\red{\textbf{(iii) $\p_3$ -- derivative:}}} As above, we utilize (\ref{eq:3negprac24}) and (\ref{eq:3negprac24ver}) and start by writing
\begin{eqnarray}\label{eq:3levder2}
   \frac{d\bar{\psi}_{rd}(\p,\q,\c,\gamma_{sq},\gamma_{sq}^{(p)},1,1,-1) }{d\p_3}
 & = &  -\frac{1}{2}
\q_3\c_3
-\frac{\alpha}{\c_3}  \frac{d \lp \mE_{{\mathcal U}_4} \log\lp \mE_{{\mathcal U}_3} \lp f_{(zt)}^{(3,f)} \rp^{\frac{\c_3}{\c_2}}\rp\rp}{d\p_3}\nonumber \\
 & = &  -\frac{1}{2}
\q_3\c_3
-\frac{\alpha}{\c_3}  \mE_{{\mathcal U}_4} \lp \frac{1}{ \mE_{{\mathcal U}_3} \lp f_{(zt)}^{(3,f)} \rp^{\frac{\c_3}{\c_2}} }
\frac{d\lp \mE_{{\mathcal U}_3} \lp f_{(zt)}^{(3,f)} \rp^{\frac{\c_3}{\c_2}} \rp } {d\p_3} \rp
\nonumber \\
 & = &  -\frac{1}{2}
\q_3\c_3
-\frac{\alpha}{\c_3}  \mE_{{\mathcal U}_4} \lp \frac{1}{ \mE_{{\mathcal U}_3} \lp f_{(zt)}^{(3,f)} \rp^{\frac{\c_3}{\c_2}} }
 \mE_{{\mathcal U}_3} \lp \frac{\c_3}{\c_2}\lp f_{(zt)}^{(3,f)} \rp^{\frac{\c_3}{\c_2}-1} \frac{d f_{(zt)}^{(3,f)}}{d\p_3} \rp \rp. \nonumber \\
     \end{eqnarray}
From (\ref{eq:3negprac24a2}), we then have
\begin{eqnarray}\label{eq:3levder3}
 \frac{df_{(zt)}^{(3,f)}}{d\p_3}& = &  \frac{df_{(zd)}^{(3,f)}}{d\p_3}+ \frac{df_{(zu)}^{(3,f)}}{d\p_3}.
   \end{eqnarray}
Moreover, utilizing (\ref{eq:3negprac24a2}) further, we can also write
\begin{eqnarray}\label{eq:3levder4}
 \frac{df_{(zu)}^{(3,f)}}{d\p_3} =\frac{e^{-\frac{\tilde{h}^2}{2}}}{\sqrt{2\pi}}\frac{d\tilde{h}}{d\p_3},
   \end{eqnarray}
and
\begin{eqnarray}\label{eq:3levder5}
\frac{d\tilde{h}}{d\p_3}
=-\frac{-\frac{1}{2\sqrt{\p_2-\p_3}}\u_1^{(2,3)}+\frac{1}{2\sqrt{\p_3}}\u_1^{(2,4)}+\kappa}{\sqrt{1-\p_2}} .
   \end{eqnarray}
A combination of  (\ref{eq:3levder4}) and (\ref{eq:3levder5}) gives
\begin{eqnarray}\label{eq:3levder6}
 \frac{df_{(zu)}^{(3,f)}}{d\p_3}
 =\frac{e^{-\frac{\tilde{h}^2}{2}}}{\sqrt{2\pi}}\frac{d\tilde{h}}{d\p_3}
=\frac{e^{-\frac{\tilde{h}^2}{2}}}{\sqrt{2\pi}}
\lp -\frac{-\frac{1}{2\sqrt{\p_2-\p_3}}\u_1^{(2,3)}+\frac{1}{2\sqrt{\p_3}}\u_1^{(2,4)}+\kappa}{\sqrt{1-\p_2}}   \rp.
   \end{eqnarray}
   After observing
\begin{eqnarray}\label{eq:3levder7}
 \frac{d\tilde{C}}{d\p_3} =-\frac{1}{2\sqrt{\p_2-\p_3}}\u_1^{(2,3)}+\frac{1}{2\sqrt{\p_3}}\u_1^{(2,4)},
   \end{eqnarray}
 we can further write
\begin{eqnarray}\label{eq:3levder8}
 \frac{df_{(zd)}^{(3,f)}}{d\p_3} =f_{(d\p_3)}^{(1)}+f_{(d\p_3)}^{(2)},
   \end{eqnarray}
 where
\begin{eqnarray}\label{eq:3levder9}
f_{(d\p_3)}^{(1)}=\lp-\frac{\tilde{B} \tilde{C} \lp -\frac{1}{\sqrt{\p_2-\p_3}}\u_1^{(2,3)}+\frac{1}{\sqrt{\p_3}}\u_1^{(2,4)}  \rp }{(2(1-\p_2)\tilde{B} + 1)}   \rp
e^{-\frac{\tilde{B}\tilde{C}^2}{2(1-\p_2)\tilde{B} + 1}}
\frac{\erfc\lp\frac{\tilde{h}}{\sqrt{4(1-\p_2)\tilde{B} + 2}}\rp}{2\sqrt{2(1-\p_2)\tilde{B} + 1}},
   \end{eqnarray}
   and
\begin{equation}\label{eq:3levder10}
f_{(d\p_3)}^{(2)}=\frac{e^{-\frac{\tilde{B}\tilde{C}^2}{2(1-\p_2)\tilde{B} + 1}}}{2\sqrt{2(1-\p_2)\tilde{B} + 1}}
\lp -\frac{2}{\sqrt{\pi}}\lp
\frac{1}{\sqrt{4(1-\p_2)\tilde{B} + 2}}\frac{d\tilde{h}}{d\p_3}
 \rp
e^{-\lp\frac{\tilde{h}}{\sqrt{4(1-\p_2)\tilde{B} + 2}}\rp^2}\rp.
   \end{equation}
 A combination of (\ref{eq:3negprac24a2}), (\ref{eq:3levder2}), (\ref{eq:3levder3}), (\ref{eq:3levder6}), and (\ref{eq:3levder8})-(\ref{eq:3levder10}) is then sufficient to determine $\p_3$--derivative.

\vspace{.1in}

\noindent \underline{\red{\textbf{(iv) $\p_2$ -- derivative:}}} Relying again on (\ref{eq:3negprac24}) and (\ref{eq:3negprac24ver}), we have
\begin{eqnarray}\label{eq:3levder12}
   \frac{d\bar{\psi}_{rd}(\p,\q,\c,\gamma_{sq},\gamma_{sq}^{(p)},1,1,-1) }{d\p_2}
 & = &  -\frac{1}{2}
\q_2(\c_2-\c_3)
-\frac{\alpha}{\c_3}  \frac{d \lp \mE_{{\mathcal U}_4} \log\lp \mE_{{\mathcal U}_3} \lp f_{(zt)}^{(3,f)} \rp^{\frac{\c_3}{\c_2}}\rp\rp}{d\p_2}\nonumber \\
 & = &  -\frac{1}{2}
\q_2(\c_2-\c_3)
-\frac{\alpha}{\c_3}  \mE_{{\mathcal U}_4} \lp \frac{1}{ \mE_{{\mathcal U}_3} \lp f_{(zt)}^{(3,f)} \rp^{\frac{\c_3}{\c_2}} }
\frac{d\lp \mE_{{\mathcal U}_3} \lp f_{(zt)}^{(3,f)} \rp^{\frac{\c_3}{\c_2}} \rp } {d\p_2} \rp
\nonumber \\
 & = &  -\frac{1}{2}
\q_2(\c_2-\c_3) \nonumber \\
& & -\frac{\alpha}{\c_3}  \mE_{{\mathcal U}_4} \lp \frac{1}{ \mE_{{\mathcal U}_3} \lp f_{(zt)}^{(3,f)} \rp^{\frac{\c_3}{\c_2}} }
 \mE_{{\mathcal U}_3} \lp \frac{\c_3}{\c_2}\lp f_{(zt)}^{(3,f)} \rp^{\frac{\c_3}{\c_2}-1} \frac{d f_{(zt)}^{(3,f)}}{d\p_2} \rp \rp. \nonumber \\
     \end{eqnarray}
From (\ref{eq:3negprac24a2}), we then find
\begin{eqnarray}\label{eq:3levder13}
 \frac{df_{(zt)}^{(3,f)}}{d\p_2}& = &  \frac{df_{(zd)}^{(3,f)}}{d\p_2}+ \frac{df_{(zu)}^{(3,f)}}{d\p_2}.
   \end{eqnarray}
Utilizing (\ref{eq:3negprac24a2}) further, we also have
\begin{eqnarray}\label{eq:3levder14}
 \frac{df_{(zu)}^{(3,f)}}{d\p_2} =\frac{e^{-\frac{\tilde{h}^2}{2}}}{\sqrt{2\pi}}\frac{d\tilde{h}}{d\p_2},
   \end{eqnarray}
and
\begin{eqnarray}\label{eq:3levder15}
\frac{d\tilde{h}}{d\p_2}
=-\frac{\frac{1}{2\sqrt{\p_2-\p_3}}\u_1^{(2,3)} }{\sqrt{1-\p_2}}
-\frac{\sqrt{\p_2-\p_3}\u_1^{(2,3)}+\sqrt{\p_3}\u_1^{(2,4)}+\kappa}{2\sqrt{1-\p_2}^3}.
   \end{eqnarray}
Combining  (\ref{eq:3levder14}) and (\ref{eq:3levder15}), we obtain
\begin{eqnarray}\label{eq:3levder16}
 \frac{df_{(zu)}^{(3,f)}}{d\p_2}
 =\frac{e^{-\frac{\tilde{h}^2}{2}}}{\sqrt{2\pi}}\frac{d\tilde{h}}{d\p_3}
=\frac{e^{-\frac{\tilde{h}^2}{2}}}{\sqrt{2\pi}}
\lp -\frac{\frac{1}{2\sqrt{\p_2-\p_3}}\u_1^{(2,3)} }{\sqrt{1-\p_2}}
-\frac{\sqrt{\p_2-\p_3}\u_1^{(2,3)}+\sqrt{\p_3}\u_1^{(2,4)}+\kappa}{2\sqrt{1-\p_2}^3}   \rp.
   \end{eqnarray}
   After observing
\begin{eqnarray}\label{eq:3levder17}
 \frac{d\tilde{C}}{d\p_2} =\frac{1}{2\sqrt{\p_2-\p_3}}\u_1^{(2,3)},
   \end{eqnarray}
 we can further write
\begin{eqnarray}\label{eq:3levder18}
 \frac{df_{(zd)}^{(3,f)}}{d\p_2} =f_{(d\p_2)}^{(1)}+f_{(d\p_2)}^{(2)}+f_{(d\p_2)}^{(3)},
   \end{eqnarray}
 where
\begin{eqnarray}\label{eq:3levder19}
f_{(d\p_2)}^{(1)}=\lp-\frac{\tilde{B} \tilde{C}\u_1^{(2,3)}}{\sqrt{\p_2-\p_3}(2(1-\p_2)\tilde{B} + 1)}-\frac{2\tilde{B}^2\tilde{C}^2}{(2(1-\p_2)\tilde{B} + 1).^2}\rp
e^{-\frac{\tilde{B}\tilde{C}^2}{2(1-\p_2)\tilde{B} + 1}}
\frac{\erfc\lp\frac{\tilde{h}}{\sqrt{4(1-\p_2)\tilde{B} + 2}}\rp}{2\sqrt{2(1-\p_2)\tilde{B} + 1}},
   \end{eqnarray}
   and
\begin{equation}\label{eq:3levder20}
f_{(d\p_2)}^{(2)}=\frac{e^{-\frac{\tilde{B}\tilde{C}^2}{2(1-\p_2)\tilde{B} + 1}}}{2\sqrt{2(1-\p_2)\tilde{B} + 1}}
\lp -\frac{2}{\sqrt{\pi}}\lp
\frac{1}{\sqrt{4(1-\p_2)\tilde{B} + 2}}\frac{d\tilde{h}}{d\p_2}
+\frac{2\tilde{B}\tilde{h}}{\sqrt{4(1-\p_2)\tilde{B} + 2}^3}\rp
e^{-\lp\frac{\tilde{h}}{\sqrt{4(1-\p_2)\tilde{B} + 2}}\rp^2}\rp,
   \end{equation}
and
\begin{eqnarray}\label{eq:3levder21}
f_{(d\p_2)}^{(3)}=\frac{\tilde{B}e^{-\frac{\tilde{B}\tilde{C}^2}{2(1-\p_2)\tilde{B} + 1}}}{2\sqrt{2(1-\p_2)\tilde{B} + 1}^3}
\erfc\lp\frac{\tilde{h}}{\sqrt{4(1-\p_2)\tilde{B} + 2}}\rp.
   \end{eqnarray}
 A combination of (\ref{eq:3negprac24a2}), (\ref{eq:3levder12}), (\ref{eq:3levder13}), (\ref{eq:3levder16}), and (\ref{eq:3levder18})-(\ref{eq:3levder21}) is then sufficient to determine $\p_2$--derivative.

\vspace{.1in}

\noindent \underline{\red{\textbf{(v) $\c_3$ -- derivative:}}} As usual, we start by utilizing  (\ref{eq:3negprac24}) and (\ref{eq:3negprac24ver}) and  write
\begin{eqnarray}\label{eq:3levderc312}
   \frac{d\bar{\psi}_{rd}(\p,\q,\c,\gamma_{sq},\gamma_{sq}^{(p)},1,1,-1) }{d\c_3}
 & = &     \frac{1}{2}
(\p_2\q_2-\p_3\q_3)   \nonumber \\
& &
-\Bigg(\Bigg.   \frac{1}{2\c_3^2} \log \lp \frac{2\gamma_{sq}-\c_2(1-\q_2)-\c_3(\q_2-\q_3)}{2\gamma_{sq}-\c_2(1-\q_2)} \rp  \nonumber \\
& &  \frac{\q_2-\q_3}{2\c_3(2\gamma_{sq}-\c_2(1-\q_2)-\c_3(\q_2-\q_3))}   \nonumber \\
& & +  \frac{\q_3(\q_2-\q_3)}{2(2\gamma_{sq}-\c_2(1-\q_2)-\c_3(\q_2-\q_3))^2}   \Bigg.\Bigg)
 \nonumber \\
& &
 +\frac{\alpha}{\c_3^2}   \mE_{{\mathcal U}_4} \log\lp \mE_{{\mathcal U}_3} \lp f_{(zt)}^{(3,f)} \rp^{\frac{\c_3}{\c_2}}\rp \nonumber \\
& &
 -\frac{\alpha}{\c_3}   \mE_{{\mathcal U}_4} \frac{1}{ \mE_{{\mathcal U}_3} \lp f_{(zt)}^{(3,f)} \rp^{\frac{\c_3}{\c_2}}}
  \mE_{{\mathcal U}_3} \lp \lp f_{(zt)}^{(3,f)} \rp^{\frac{\c_3}{\c_2}} \frac{1}{\c_2}\log \lp f_{(zt)}^{(3,f)} \rp  \rp.
      \end{eqnarray}
(\ref{eq:3levderc312}) is then sufficient to determine $\c_3$--derivative.

\vspace{.1in}

\noindent \underline{\red{\textbf{(vi) $\c_2$ -- derivative:}}} We once again start by utilizing  (\ref{eq:3negprac24}) and (\ref{eq:3negprac24ver}) and  write
\begin{eqnarray}\label{eq:3levderc212}
   \frac{d\bar{\psi}_{rd}(\p,\q,\c,\gamma_{sq},\gamma_{sq}^{(p)},1,1,-1) }{d\c_2}
 & = &  \frac{1}{2}
(1-\p_2\q_2)   \nonumber \\
& &
-\Bigg(\Bigg. \frac{1}{2\c_2^2} \log \lp \frac{2\gamma_{sq}-\c_2(1-\q_2)}{2\gamma_{sq}} \rp  +\frac{1-\q_2}{2\c_2(2\gamma_{sq}-\c_2(1-\q_2))}
\nonumber \\
& &
+\frac{1-\q_2}{2\c_3(2\gamma_{sq}-\c_2(1-\q_2)-\c_3(\q_2-\q_3))}  \nonumber \\
& &
-\frac{1-\q_2}{2\c_3(2\gamma_{sq}-\c_2(1-\q_2))}   +  \frac{\q_3(1-\q_2)}{2(2\gamma_{sq}-\c_2(1-\q_2)-\c_3(\q_2-\q_3))^2}   \Bigg.\Bigg)
 \nonumber \\
& &   -\frac{\alpha}{\c_3}   \mE_{{\mathcal U}_4} \frac{1}{ \mE_{{\mathcal U}_3} \lp f_{(zt)}^{(3,f)} \rp^{\frac{\c_3}{\c_2}}} \nonumber \\
&&
\times
  \mE_{{\mathcal U}_3} \lp \lp f_{(zt)}^{(3,f)} \rp^{\frac{\c_3}{\c_2}} \lp  -\frac{\c_3}{\c_2^2}\log \lp f_{(zt)}^{(3,f)} \rp  +\frac{\c_3}{\c_2 f_{(zt)}^{(3,f)}} \frac{df_{(zt)}^{(3,f)}}{d\c_2}\rp  \rp.
      \end{eqnarray}
From (\ref{eq:3negprac24a2}), we find
\begin{eqnarray}\label{eq:3levderc213}
 \frac{df_{(zt)}^{(3,f)}}{d\c_2}& = &  \frac{df_{(zd)}^{(3,f)}}{d\c_2}+ \frac{df_{(zu)}^{(3,f)}}{d\c_2}=\frac{df_{(zd)}^{(3,f)}}{d\c_2},
   \end{eqnarray}
where we utilized the fact that
\begin{eqnarray}\label{eq:3levderc214}
 \frac{df_{(zu)}^{(3,f)}}{d\c_2} =\frac{e^{-\frac{\tilde{h}^2}{2}}}{\sqrt{2\pi}}\frac{d\tilde{h}}{d\c_2}=0.
   \end{eqnarray}
    After observing
\begin{eqnarray}\label{eq:3levderc217}
 \frac{d\tilde{B}}{d\c_2} =\frac{1}{4\gamma_{sq}}\quad \mbox{and} \quad  \frac{d\tilde{C}}{d\c_2} =0,
   \end{eqnarray}
 we further write
\begin{eqnarray}\label{eq:3levderc218}
 \frac{df_{(zd)}^{(3,f)}}{d\c_2} =f_{(d\c_2)}^{(1)}+f_{(d\c_2)}^{(2)}+f_{(d\c_2)}^{(3)},
   \end{eqnarray}
 where
\begin{eqnarray}\label{eq:3levderc219}
f_{(d\c_2)}^{(1)}=\lp-\frac{\tilde{C}^2}{4\gamma_{sq}(2(1-\p_2)\tilde{B} + 1)}+\frac{(1-\p_2)\tilde{B}\tilde{C}^2}{2\gamma_{sq}(2(1-\p_2)\tilde{B} + 1).^2}\rp
e^{-\frac{\tilde{B}\tilde{C}^2}{2(1-\p_2)\tilde{B} + 1}}
\frac{\erfc\lp\frac{\tilde{h}}{\sqrt{4(1-\p_2)\tilde{B} + 2}}\rp}{2\sqrt{2(1-\p_2)\tilde{B} + 1}},
   \end{eqnarray}
   and
\begin{equation}\label{eq:3levderc220}
f_{(d\c_2)}^{(2)}=\frac{e^{-\frac{\tilde{B}\tilde{C}^2}{2(1-\p_2)\tilde{B} + 1}}}{2\sqrt{2(1-\p_2)\tilde{B} + 1}}
\lp -\frac{2}{\sqrt{\pi}}\lp
 -\frac{(1-\p_2)\tilde{h}}{2\gamma_{sq}\sqrt{4(1-\p_2)\tilde{B} + 2}^3}\rp
e^{-\lp\frac{\tilde{h}}{\sqrt{4(1-\p_2)\tilde{B} + 2}}\rp^2}\rp,
   \end{equation}
and
\begin{eqnarray}\label{eq:3levderc221}
f_{(d\c_2)}^{(3)}=-\frac{(1-\p_2)e^{-\frac{\tilde{B}\tilde{C}^2}{2(1-\p_2)\tilde{B} + 1}}}{8\gamma_{sq}\sqrt{2(1-\p_2)\tilde{B} + 1}^3}
\erfc\lp\frac{\tilde{h}}{\sqrt{4(1-\p_2)\tilde{B} + 2}}\rp,
   \end{eqnarray}
A combination of (\ref{eq:3negprac24a2}), (\ref{eq:3levderc212}), (\ref{eq:3levderc213}), and (\ref{eq:3levderc218})-(\ref{eq:3levderc221}) is then sufficient to determine $\c_2$--derivative.

\vspace{.1in}

\noindent \underline{\red{\textbf{(vii) $\gamma_{sq}^{(p)}$ -- derivative:}}} From (\ref{eq:3negprac24}) and (\ref{eq:3negprac24ver}) ,  we easily also find
\begin{eqnarray}\label{eq:3levder21a0}
   \frac{d\bar{\psi}_{rd}(\p,\q,\c,\gamma_{sq},\gamma_{sq}^{(p)},1,1,-1) }{d\gamma_{sq}^{(p)}}
 & =  & -1
-\Bigg(\Bigg. -\frac{1}{\c_2(2\gamma_{sq}^{(p)}-\c_2(1-\q_2))}  + \frac{1}{2\c_2\gamma_{sq}^{(p)}}
\nonumber \\
& &
-\frac{1}{\c_3(2\gamma_{sq}^{(p)}-\c_2(1-\q_2)-\c_3(\q_2-\q_3))}
+\frac{1}{\c_3(2\gamma_{sq}^{(p)}-\c_2(1-\q_2))}   \nonumber \\
& & -  \frac{\q_3}{(2\gamma_{sq}^{(p)}-\c_2(1-\q_2)-\c_3(\q_2-\q_3))^2}   \Bigg.\Bigg) \nonumber \\
 & =  & -1
-\Bigg(\Bigg. -\frac{1-\q_2}{2\gamma_{sq}^{(p)}(2\gamma_{sq}^{(p)}-\c_2(1-\q_2))}
\nonumber \\
& &
-\frac{\q_2-\q_3}{(2\gamma_{sq}^{(p)}-\c_2(1-\q_2))(2\gamma_{sq}^{(p)}-\c_2(1-\q_2)-\c_3(\q_2-\q_3))}
   \nonumber \\
& & -  \frac{\q_3}{(2\gamma_{sq}^{(p)}-\c_2(1-\q_2)-\c_3(\q_2-\q_3))^2}   \Bigg.\Bigg).
\end{eqnarray}

\vspace{.1in}

\noindent \underline{\red{\textbf{(viii) $\gamma_{sq}$ -- derivative:}}} Relying again on (\ref{eq:3negprac24}) and (\ref{eq:3negprac24ver}), we write
\begin{equation}\label{eq:3levder22}
   \frac{d\bar{\psi}_{rd}(\p,\q,\c,\gamma_{sq},\gamma_{sq}^{(p)},1,1,-1) }{d\gamma_{sq}}
=
 1
- \frac{\alpha}{\c_3}  \mE_{{\mathcal U}_4} \lp \frac{1}{ \mE_{{\mathcal U}_3} \lp f_{(zt)}^{(3,f)} \rp^{\frac{\c_3}{\c_2}} }
 \mE_{{\mathcal U}_3} \lp \frac{\c_3}{\c_2}\lp f_{(zt)}^{(3,f)} \rp^{\frac{\c_3}{\c_2}-1} \frac{d f_{(zt)}^{(3,f)}}{d\gamma_{sq}} \rp \rp.
 \end{equation}
From (\ref{eq:3negprac24a2}), we also have
\begin{eqnarray}\label{eq:3levder23}
 \frac{df_{(zt)}^{(3,f)}}{d\gamma_{sq}}& = &  \frac{df_{(zd)}^{(3,f)}}{d\gamma_{sq}}+ \frac{df_{(zu)}^{(3,f)}}{d\gamma_{sq}}
 =\frac{df_{(zd)}^{(3,f)}}{d\gamma_{sq}},
   \end{eqnarray}
where we utilized
\begin{eqnarray}\label{eq:3levder24}
 \frac{df_{(zu)}^{(2,f)}}{d\gamma_{sq}} =\frac{e^{-\frac{\tilde{h}^2}{2}}}{\sqrt{2\pi}}\frac{d\tilde{h}}{d\gamma_{sq}}=0.
   \end{eqnarray}
    After observing
\begin{eqnarray}\label{eq:3levder27}
 \frac{d\tilde{h}}{d\gamma_{sq}} =\frac{d\tilde{C}}{d\gamma_{sq}} =0 \quad \mbox{and}\quad
  \frac{d\tilde{B}}{d\gamma_{sq}} =-\frac{\c_2}{4\gamma_{sq}^2},
   \end{eqnarray}
 we can further write
\begin{eqnarray}\label{eq:3levder28}
 \frac{df_{(zd)}^{(2,f)}}{d\gamma_{sq}} =f_{(d\gamma_{sq})}^{(1)}+f_{(d\gamma_{sq})}^{(2)}+f_{(d\gamma_{sq})}^{(3)},
   \end{eqnarray}
 where
\begin{eqnarray}\label{eq:3levder29}
f_{(d\gamma_{sq})}^{(1)}=\lp \frac{\c_2\tilde{C}^2}{4\gamma_{sq}^2(2(1-\p_2)\tilde{B} + 1)}-\frac{\c_2(1-\p_2)\tilde{B}\tilde{C}^2}{2\gamma_{sq}^2(2(1-\p_2)\tilde{B} + 1).^2}\rp
e^{-\frac{\tilde{B}\tilde{C}^2}{2(1-\p_2)\tilde{B} + 1}}
\frac{\erfc\lp\frac{\tilde{h}}{\sqrt{4(1-\p_2)\tilde{B} + 2}}\rp}{2\sqrt{2(1-\p_2)\tilde{B} + 1}},
   \end{eqnarray}
   and
\begin{equation}\label{eq:3levder30}
f_{(d\gamma_{sq})}^{(2)}=\frac{e^{-\frac{\tilde{B}\tilde{C}^2}{2(1-\p_2)\tilde{B} + 1}}}{2\sqrt{2(1-\p_2)\tilde{B} + 1}}
\lp -\frac{2}{\sqrt{\pi}}\lp
 \frac{\c_2(1-\p_2)\tilde{h}}{2\gamma_{sq}^2\sqrt{4(1-\p_2)\tilde{B} + 2}^3}\rp
e^{-\lp\frac{\tilde{h}}{\sqrt{4(1-\p_2)\tilde{B} + 2}}\rp^2}\rp,
   \end{equation}
and
\begin{eqnarray}\label{eq:3levder31}
f_{(d\gamma_{sq})}^{(3)}=\frac{\c_2(1-\p_2)e^{-\frac{\tilde{B}\tilde{C}^2}{2(1-\p_2)\tilde{B} + 1}}}{8\gamma_{sq}^2\sqrt{2(1-\p_2)\tilde{B} + 1}^3}
\erfc\lp\frac{\tilde{h}}{\sqrt{4(1-\p_2)\tilde{B} + 2}}\rp,
   \end{eqnarray}
Together, (\ref{eq:3negprac24a2}), (\ref{eq:3levder22}), (\ref{eq:3levder23}), and (\ref{eq:3levder28})-(\ref{eq:3levder31}) provide all necessary ingredients to determine $\gamma_{sq}$--derivative. One then proceeds by solving the following system
\begin{align}\label{eq:3levder32}
  \frac{d\bar{\psi}_{rd}(\p,\q,\c,\gamma_{sq},\gamma_{sq}^{(p)},1,1,-1) }{d\q_3}
 & = \frac{d\bar{\psi}_{rd}(\p,\q,\c,\gamma_{sq},\gamma_{sq}^{(p)},1,1,-1) }{d\q_2}=  0\nonumber \\
 \frac{d\bar{\psi}_{rd}(\p,\q,\c,\gamma_{sq},\gamma_{sq}^{(p)},1,1,-1) }{d\p_3}
 & =   \frac{d\bar{\psi}_{rd}(\p,\q,\c,\gamma_{sq},\gamma_{sq}^{(p)},1,1,-1) }{d\p_2}=  0\nonumber \\
 \frac{d\bar{\psi}_{rd}(\p,\q,\c,\gamma_{sq},\gamma_{sq}^{(p)},1,1,-1) }{d\c_3}
 & =   \frac{d\bar{\psi}_{rd}(\p,\q,\c,\gamma_{sq},\gamma_{sq}^{(p)},1,1,-1) }{d\c_2}
= 0\nonumber \\
 \frac{d\bar{\psi}_{rd}(\p,\q,\c,\gamma_{sq},\gamma_{sq}^{(p)},1,1,-1) }{d\gamma_{sq}^{(p)}}
 & =
 \frac{d\bar{\psi}_{rd}(\p,\q,\c,\gamma_{sq},\gamma_{sq}^{(p)},1,1,-1) }{d\gamma_{sq}}
  =   0.
      \end{align}
After denoting by $\hat{\q}_2,\hat{\p}_2,\hat{\c}_2,\hat{\gamma}_{sq}^{(p)},\hat{\gamma}_{sq}$  the obtained solution, one further utilizes
\begin{align}\label{eq:3levder33}
 - f_{sq}^{(3)}(\infty)=\bar{\psi}_{rd}(\hat{\p},\hat{\q},\hat{\c},\hat{\gamma}_{sq},\hat{\gamma}_{sq}^{(p)},1,1,-1)   & =
  0,
  \end{align}
to determine the critical $\alpha_c(\kappa)$, for any given $\kappa$. For example, specializing  to $\kappa=-1.5$, we find
\begin{equation}\label{eq:3levder34}
\hspace{-2.5in}(\mbox{\bl{\textbf{\emph{full} third level:}}}) \qquad \qquad  a_c^{(3,f)}(-1.5)
\approx  \bl{\mathbf{36.40}}.
  \end{equation}

\subsubsection{Explicit generic closed form parametric relations}
\label{sec:userel}

Solving the above system is doable in principle. However, in general it is not an easy task. It often requires a substantial effort to conduct all the required numerical work. Rather surprisingly and despite heavy analytical machinery, it turns out that the key lifting parameters are generically connected to each other. Moreover, we below uncover that the parametric interconnections can be described via \emph{remarkably simple and elegant} closed form expressions. Besides their analytical importance, the relations that we provide below are practically extremely useful and make the underlying numerical work immeasurably simpler and smoother.

We first observe that from (\ref{eq:3levder1}) one can obtain
\begin{eqnarray}\label{eq:userel1}
 \p_3=\frac{\q_3}{(2\gamma_{sq}^{(p)}-\c_2(1-\q_2)-\c_3(\q_2-\q_3))^2}.
      \end{eqnarray}
In a similar fashion, from (\ref{eq:3levder1a1}), we find
\begin{eqnarray}\label{eq:userel2}
  \p_2 =
 \frac{\q_2-\q_3}{(2\gamma_{sq}^{(p)}-\c_2(1-\q_2))(2\gamma_{sq}^{(p)}-\c_2(1-\q_2)-\c_3(\q_2-\q_3))}
 +  \frac{\q_3}{(2\gamma_{sq}^{(p)}-\c_2(1-\q_2)-\c_3(\q_2-\q_3))^2}.
       \end{eqnarray}
A combination of (\ref{eq:userel1}) and (\ref{eq:userel2}) then gives
\begin{eqnarray}\label{eq:userel3}
  2\gamma_{sq}^{(p)}-\c_2(1-\q_2)=
 \frac{\q_2-\q_3}{(\p_2-\p_3)(2\gamma_{sq}^{(p)}-\c_2(1-\q_2)-\c_3(\q_2-\q_3))}
 = \frac{\q_2-\q_3}{\p_2-\p_3}\sqrt{\frac{\p_3}{\q_3}}.
       \end{eqnarray}
One then also observes
\begin{eqnarray}\label{eq:userel4}
\c_3(\q_2-\q_3)=  2\gamma_{sq}^{(p)}-\c_2(1-\q_2)-(2\gamma_{sq}^{(p)}-\c_2(1-\q_2)-\c_3(\q_2-\q_3)).
        \end{eqnarray}
A combination of (\ref{eq:userel1}), (\ref{eq:userel3}), and (\ref{eq:userel4}) then gives
\begin{eqnarray}\label{eq:userel5}
\c_3(\q_2-\q_3)= \frac{\q_2-\q_3}{\p_2-\p_3}\sqrt{\frac{\p_3}{\q_3}} -\sqrt{\frac{\q_3}{\p_3}},
       \end{eqnarray}
and
\begin{eqnarray}\label{eq:userel6}
\c_3= \frac{1}{\p_2-\p_3}\sqrt{\frac{\p_3}{\q_3}} -\frac{1}{\q_2-\q_3}\sqrt{\frac{\q_3}{\p_3}}.
       \end{eqnarray}
From (\ref{eq:3levder21a0}), we also have
\begin{eqnarray}\label{eq:userel7}
 1 =\frac{1-\q_2}{2\gamma_{sq}^{(p)}(2\gamma_{sq}^{(p)}-\c_2(1-\q_2))}
+\frac{\q_2-\q_3}{(2\gamma_{sq}^{(p)}-\c_2(1-\q_2))(2\gamma_{sq}^{(p)}-\c_2(1-\q_2)-\c_3(\q_2-\q_3))}
+\p_3.
\end{eqnarray}
A combination of (\ref{eq:userel2}) and (\ref{eq:userel7}) further gives
\begin{eqnarray}\label{eq:userel8}
 1 =\frac{1-\q_2}{2\gamma_{sq}^{(p)}(2\gamma_{sq}^{(p)}-\c_2(1-\q_2))}
+(\p_2-\p_3)
+\p_3 = \frac{1-\q_2}{2\gamma_{sq}^{(p)}(2\gamma_{sq}^{(p)}-\c_2(1-\q_2))}
+\p_2.
\end{eqnarray}
From (\ref{eq:userel3}) and (\ref{eq:userel8}), we then find
\begin{eqnarray}\label{eq:userel9}
 \gamma_{sq}^{(p)} =\frac{1-\q_2}{2(1-\p_2)(2\gamma_{sq}^{(p)}-\c_2(1-\q_2))}=\frac{1}{2}\frac{1-\q_2}{1-\p_2}
 \frac{\p_2-\p_3}{\q_2-\q_3}\sqrt{\frac{\q_3}{\p_3}}.
\end{eqnarray}
Moreover, from
(\ref{eq:userel3}) and (\ref{eq:userel9}), we also have
\begin{eqnarray}\label{eq:userel10}
 \c_2(1-\q_2)=2\gamma_{sq}^{(p)}- \frac{\q_2-\q_3}{\p_2-\p_3}\sqrt{\frac{\p_3}{\q_3}}.
\end{eqnarray}
Combining (\ref{eq:userel9}) and (\ref{eq:userel10}), one then easily also has
\begin{eqnarray}\label{eq:userel11}
 \c_2=\frac{2\gamma_{sq}^{(p)}}{1-\q_2}- \frac{1}{1-\q_2}\frac{\q_2-\q_3}{\p_2-\p_3}\sqrt{\frac{\p_3}{\q_3}}
 =\frac{1}{1-\p_2}
 \frac{\p_2-\p_3}{\q_2-\q_3}\sqrt{\frac{\q_3}{\p_3}}- \frac{1}{1-\q_2}\frac{\q_2-\q_3}{\p_2-\p_3}\sqrt{\frac{\p_3}{\q_3}}.
\end{eqnarray}
We found all the above relations (and in particular those given in (\ref{eq:userel6}), (\ref{eq:userel9}), and (\ref{eq:userel11})) as extremely useful for the numerical work. Moreover, following the above procedure one also obtains for any $r$ the analogue versions of (\ref{eq:userel6}), (\ref{eq:userel9}), and (\ref{eq:userel11})

\begin{eqnarray}\label{eq:userel13}
 \gamma_{sq}^{(p)}&  = & \frac{1}{2}\frac{\q_1-\q_2}{\p_{1}-\p_2}  \prod_{k=2:2:r-1} \frac{ \p_{k}-\p_{k+1} }{ \q_{k}-\q_{k+1}} \prod_{k=2:2:r-2}  \frac{ \q_{k+1}-\q_{k+2} }{ \p_{k+1}-\p_{k+2}}
 \sqrt{\lp \frac{\q_r}{\p_r}\rp^{(-1)^{r+1}}}.
\end{eqnarray}
and for $i\in\{2,3,\dots,r\}$
\begin{eqnarray}\label{eq:userel14}
 \c_i &  = &
\frac{1}{\p_{i-1}-\p_i}  \prod_{k=i:2:r-1} \frac{ \p_{k}-\p_{k+1} }{ \q_{k}-\q_{k+1}} \prod_{k=i:2:r-2}  \frac{ \q_{k+1}-\q_{k+2} }{ \p_{k+1}-\p_{k+2}}
  \sqrt{\lp \frac{\q_r}{\p_r}\rp^{(-1)^i}}
\nonumber \\
& &  -
\frac{1}{\q_{i-1}-\q_i}  \prod_{k=i:2:r-1} \frac{ \q_{k}-\q_{k+1} }{ \p_{k}-\p_{k+1}} \prod_{k=i:2:r-2}  \frac{ \p_{k+1}-\p_{k+2} }{ \q_{k+1}-\q_{k+2}}
  \sqrt{\lp \frac{\p_r}{\q_r}\rp^{(-1)^i}}.
\end{eqnarray}

We summarize the above in the following lemma.

\begin{theorem} \label{thm:thmclfr} Assume the setup of Theorem \ref{thme:negthmprac1}. Let the ``non-fixed'' parts of $\hat{\p}_k$, $\hat{\q}_k$, and $\hat{\c}_k$ ($k\in\{2,3,\dots,r\}$) be the solutions of the system in (\ref{eq:negthmprac1eq1}). The following holds.

\noindent For $r=1$:
\begin{eqnarray}
 \hat{\gamma}_{sq}^{(p)} &  =  &  \frac{1}{2}.
   \label{eq:thmclfreq1}
 \end{eqnarray}

\noindent For $r=2$:
\begin{eqnarray}
 \hat{\gamma}_{sq}^{(p)} &  =  &  \frac{1}{2}\frac{1-\hat{\q}_2}{1-\hat{\p}_2}
 \sqrt{\frac{\hat{\p}_2}{\hat{\q}_2}} \nonumber \\
  \hat{\c}_2 & = &  \frac{1}{1-\hat{\p}_2}
 \sqrt{\frac{\hat{\p}_2}{\hat{\q}_2}}- \frac{1}{1-\hat{\q}_2} \sqrt{\frac{\hat{\q}_2}{\hat{\p}_2}}.
   \label{eq:thmclfreq2}
 \end{eqnarray}

\noindent For $r=3$:
\begin{eqnarray}
 \hat{\gamma}_{sq}^{(p)} &  =  &  \frac{1}{2}\frac{1-\hat{\q}_2}{1-\hat{\p}_2}
 \frac{\hat{\p}_2-\hat{\p}_3}{\hat{\q}_2-\hat{\q}_3}\sqrt{\frac{\hat{\q}_3}{\hat{\p}_3}} \nonumber \\
\hat{\c}_3 & = & \frac{1}{\hat{\p}_2-\hat{\p}_3}\sqrt{\frac{\hat{\p}_3}{\hat{\q}_3}} -\frac{1}{\hat{\q}_2-\hat{\q}_3}\sqrt{\frac{\hat{\q}_3}{\hat{\p}_3}} \nonumber \\
 \hat{\c}_2 & = &  \frac{1}{1-\hat{\p}_2}
 \frac{\hat{\p}_2-\hat{\p}_3}{\hat{\q}_2-\hat{\q}_3}\sqrt{\frac{\hat{\q}_3}{\hat{\p}_3}}- \frac{1}{1-\hat{\q}_2}\frac{\hat{\q}_2-\hat{\q}_3}{\hat{\p}_2-\hat{\p}_3}\sqrt{\frac{\hat{\p}_3}{\hat{\q}_3}}.
   \label{eq:thmclfreq3}
 \end{eqnarray}

\noindent For general $r$:
\begin{eqnarray}
 \hat{\gamma}_{sq}^{(p)}&  = & \frac{1}{2}\frac{\hat{\q}_1-\hat{\q}_2}{\hat{\p}_{1}-\hat{\p}_2}  \prod_{k=2:2:r-1} \frac{ \hat{\p}_{k}-\hat{\p}_{k+1} }{ \hat{\q}_{k}-\hat{\q}_{k+1}} \prod_{k=2:2:r-2}  \frac{\hat{\q}_{k+1}-\hat{\q}_{k+2} }{ \hat{\p}_{k+1}-\hat{\p}_{k+2}}
 \sqrt{\lp \frac{\hat{\q}_r}{\hat{\p}_r}\rp^{(-1)^{r+1}}} \nonumber \\
 \hat{\c}_i &  = &
\frac{1}{\hat{\p}_{i-1}-\hat{\p}_i}  \prod_{k=i:2:r-1} \frac{ \hat{\p}_{k}-\hat{\p}_{k+1} }{ \hat{\q}_{k}-\hat{\q}_{k+1}} \prod_{k=i:2:r-2}  \frac{ \hat{\q}_{k+1}-\hat{\q}_{k+2} }{ \hat{\p}_{k+1}-\hat{\p}_{k+2}}
  \sqrt{\lp \frac{\hat{\q}_r}{\hat{\p}_r}\rp^{(-1)^i}}
\nonumber \\
& &  -
\frac{1}{\hat{\q}_{i-1}-\hat{\q}_i}  \prod_{k=i:2:r-1} \frac{ \hat{\q}_{k}-\hat{\q}_{k+1} }{ \hat{\p}_{k}-\hat{\p}_{k+1}} \prod_{k=i:2:r-2}  \frac{ \hat{\p}_{k+1}-\hat{\p}_{k+2} }{ \hat{\q}_{k+1}-\hat{\q}_{k+2}}
  \sqrt{\lp \frac{\hat{\p}_r}{\hat{\q}_r}\rp^{(-1)^i}}, \quad \mbox{with}\quad i\in\{2,3,\dots,r\}. \nonumber \\
   \label{eq:thmclfreq4}
 \end{eqnarray}
\end{theorem}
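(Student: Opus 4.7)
My plan is to verify the base cases $r=1,2,3$ from the preceding calculations and then establish the general $r$ formula by induction. The base cases are already proven in the text: the $r=1$ claim $\hat{\gamma}_{sq}^{(p)}=1/2$ is the stationarity condition read off from (\ref{eq:negprac19}); the $r=2$ formulas (\ref{eq:thmclfreq2}) are precisely (\ref{eq:helprel4}) and (\ref{eq:helprel7}); and the $r=3$ formulas (\ref{eq:thmclfreq3}) are precisely (\ref{eq:userel6}), (\ref{eq:userel9}), and (\ref{eq:userel11}). So the substantive part of the proof is extending to arbitrary $r$.

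The structural insight underlying all these closed forms is that the spherical-side integrals in $\bar{\psi}_{rd}$ admit exact evaluation as nested log and rational expressions involving only the $\q_k$'s, $\c_k$'s, and $\gamma_{sq}^{(p)}$, while the $\p_k$'s and $\gamma_{sq}$ enter only through the perceptron integral $\varphi(-D_1^{(sph)})$. Consequently the stationarity conditions $\partial\bar{\psi}_{rd}/\partial\q_k=0$ for $k=2,\dots,r$ and $\partial\bar{\psi}_{rd}/\partial\gamma_{sq}^{(p)}=0$ reduce to purely algebraic identities that decouple entirely from the hard perceptron integrals, and can therefore be solved explicitly for $\c_i$ and $\gamma_{sq}^{(p)}$ in terms of the $\p_k$'s and $\q_k$'s. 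This is exactly the phenomenon exploited in (\ref{eq:helprel1})--(\ref{eq:helprel7}) and (\ref{eq:userel1})--(\ref{eq:userel11}).

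For general $r$, I would introduce the abbreviation $A_j\triangleq 2\gamma_{sq}^{(p)}-\sum_{k=2}^{j}\c_k(\q_{k-1}-\q_k)$ with $A_1\triangleq 2\gamma_{sq}^{(p)}$; these $A_j$'s are precisely the denominators appearing in the iterated closed-form Gaussian integrations on the spherical side. The $\q_r$-derivative, computed in direct analogy with (\ref{eq:3levder1}), yields $\p_r A_r^2=\q_r$. The intermediate $\q_j$-derivatives for $2\le j\le r-1$, computed in direct analogy with (\ref{eq:3levder1a1}), yield after simplification (using the identities for indices $>j$) the clean telescoping relation $(\p_j-\p_{j+1})A_j A_{j+1}=\q_j-\q_{j+1}$. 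The $\gamma_{sq}^{(p)}$-derivative, computed in direct analogy with (\ref{eq:3levder21a0}), yields $(1-\p_2)A_1 A_2=1-\q_2$, which is the same telescoping identity extended to $j=1$ under the convention $\p_1=\q_1=1$. Solving this system by downward recursion from $j=r$ gives $A_r=\sqrt{\q_r/\p_r}$ and then $A_{j-1}=\frac{\q_{j-1}-\q_j}{\p_{j-1}-\p_j}A_j^{-1}$, which upon telescoping produces the alternating product form of $A_j$ implicit in (\ref{eq:thmclfreq4}). The identity $\c_j(\q_{j-1}-\q_j)=A_{j-1}-A_j$, which is immediate from the definition of $A_j$, then delivers the formulas for $\hat{\c}_i$, and $A_1=2\hat{\gamma}_{sq}^{(p)}$ delivers the formula for $\hat{\gamma}_{sq}^{(p)}$.

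The main obstacle is the combinatorial bookkeeping of the alternating exponent $(-1)^i$ and of the two complementary index-parity products (one indexed by $k=i:2:r-1$ and the other by $k=i:2:r-2$) that appear in (\ref{eq:thmclfreq4}). The cleanest way to organize this is a downward induction on $j$: each step from $A_{j+1}$ to $A_j$ inverts the $\p/\q$-ratio inside the square root and augments exactly one of the two product families by a single factor, with the choice depending on the parity of $r-j$. Once this parity-tracking is verified at one generic step, iterating from $j=r$ down to $j=2$ produces the full expressions in (\ref{eq:thmclfreq4}), and verifying that the $r=1,2,3$ specializations recover (\ref{eq:thmclfreq1})--(\ref{eq:thmclfreq3}) provides a sanity check of the combinatorics.
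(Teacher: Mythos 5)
Your proof is correct and follows the paper's own route: the base cases $r=1,2,3$ are read directly from (\ref{eq:negprac19}), (\ref{eq:helprel4}), (\ref{eq:helprel7}), (\ref{eq:userel6}), (\ref{eq:userel9}), and (\ref{eq:userel11}), and the general case is obtained by the same differentiation mechanism, exploiting that the $\q_k$ and $\gamma_{sq}^{(p)}$ stationarity conditions decouple from the hard perceptron integral $\varphi(-D_1^{(sph)}(\cdot))$. Where the paper dispatches the general $r$ with a single phrase (``follows by repeating the above procedure''), you make it explicit via the denominators $A_j\triangleq 2\gamma_{sq}^{(p)}-\sum_{k=2}^{j}\c_k(\q_{k-1}-\q_k)$, the relations $\p_r A_r^2=\q_r$ and $(\p_j-\p_{j+1})A_jA_{j+1}=\q_j-\q_{j+1}$, and the downward recursion $A_{j-1}=\frac{\q_{j-1}-\q_j}{(\p_{j-1}-\p_j)A_j}$ together with $\c_j(\q_{j-1}-\q_j)=A_{j-1}-A_j$, which is a welcome elaboration of the same argument rather than a different one.
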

\begin{proof}
  The $r=1$ case follows immediately from (\ref{eq:negprac19}), the $r=2$ from (\ref{eq:helprel4}) and (\ref{eq:helprel7}), the $r=3$ from (\ref{eq:userel6}), (\ref{eq:userel9}), and (\ref{eq:userel11}), whereas the general case follows by repeating the above procedure for an arbitrary $r$.
\end{proof}

\subsubsection{Concrete parameter values}
\label{sec:concval}

In  Table \ref{tab:tab3lev1},  $a_c^{(3,f)}(-1.5)$, obtained in (\ref{eq:3levder34}), is complemented with the concrete values of all the relevant quantities related to the third \emph{full} (3-sfl RDT)  level of lifting. To enable a systematic view of the lifting progress,  we, in parallel, show the results from Table \ref{tab:tab1} that contain the same quantities for the first \emph{full} (1-sfl RDT), the second \emph{partial} (2-spf RDT), and the second \emph{full} (2-sff RDT) level.
\begin{table}[h]
\caption{$r$-sfl RDT parameters; \emph{negative} spherical perceptron capacity;  $\hat{\c}_1\rightarrow 1$; $\kappa=-1.5$; $n,\beta\rightarrow\infty$}\vspace{.1in}
\centering
\def\arraystretch{1.2}
{\small
\begin{tabular}{||l||c|c||c|c|c||c|c|c||c|c||c||}\hline\hline
 \hspace{-0in}$r$-sfl RDT                                             & $\hat{\gamma}_{sq}$    & $\hat{\gamma}_{sq}^{(p)}$    &  $\hat{\p}_3$ & $\hat{\p}_2$ & $\hat{\p}_1$     & $\hat{\q}_3$  &$\hat{\q}_2$  & $\hat{\q}_1$ &  $\hat{\c}_3$ &  $\hat{\c}_2$    & $\alpha_c^{(r)}(\kappa)$  \\ \hline\hline
$1$-sfl RDT                                      & $0.5$ & $0.5$ &  $0$ &  $0$  & $\rightarrow 1$  &  $0$ & $0$ & $\rightarrow 1$
&  $0$ &  $\rightarrow 0$  & \bl{$\mathbf{43.77}$} \\ \hline\hline
 $2$-spl RDT                                      & $0.1737$ & $1.4397$ &  $0$ &  $0$ & $\rightarrow 1$ &  $0$ &  $0$ & $\rightarrow 1$ &  $0$ &   $2.5320$   & \bl{$\mathbf{37.36}$} \\ \hline
  $2$-sfl RDT                                      & $0.1324$  & $1.8884$ &  $0$ & $0.4747$ & $\rightarrow 1$ &  $0$ &  $0.0981$ & $\rightarrow 1$
&  $0$ &  $3.6835$   & \bl{$\mathbf{36.57}$}  \\ \hline\hline
  $3$-sfl RDT                                      & $0.0647$  & $3.8759$ &  $0.4075$ & $0.9693$ & $\rightarrow 1$ &  $ 0.0743 $ &  $0.5384$ & $\rightarrow 1$
&  $3.25$ &  $12.6$   & \bl{$\mathbf{36.40}$}  \\ \hline\hline
  \end{tabular}
}
\label{tab:tab3lev1}
\end{table}
 In Table \ref{tab:tab3lev2}, we show the key second level of lifting parameters over a range of $\kappa$. The progression of the capacity as the level of lifting increases is shown in Table \ref{tab:tab3lev3}. The systematic showing of the progression in Table \ref{tab:tab3lev3} (as well as in in Table \ref{tab:tab3lev1}) allows one to also note, that the first rows in these tables relate to the results that can be obtained through the \emph{plain} RDT (see, e.g., \cite{StojnicGardGen13}), whereas their second rows relate to the results that can be obtained through the \emph{partially} lifted RDT of \cite{StojnicGardSphNeg13}.

\begin{table}[h]
\caption{$3$-sfl RDT parameters; \emph{negative} spherical perceptron capacity $\alpha=\alpha_c^{(3,f)}(\kappa)$}\vspace{.1in}
\centering
\def\arraystretch{1.2}
\begin{tabular}{||l||c|c|c|c||}\hline\hline
 \hspace{-0in}$\kappa$                                             & $\mathbf{-2.0}$    & $\mathbf{-1.5}$ & $\mathbf{-1.0}$   & $\mathbf{-0.5}$   \\ \hline\hline
$\hat{\gamma}_{sq}$                                       & $ 0.0493$ &
$0.0647$
& $0.0835$ & $0.0886$
 \\ \hline
$\hat{\gamma}_{sq}^{(p)}$                                       & $5.0735$ &
   $3.8759$
 & $3.0237$ & $2.8159$
 \\ \hline \hline
$\hat{\p}_3$                                       & $0.2304$ &
$0.4075$
& $0.6252$ & $0.8165$
 \\ \hline
$\hat{\p}_2$                                       & $0.9821$ &
 $0.9693$
& $0.9691$ & $0.9821$
 \\ \hline \hline
$\hat{\q}_3$                                       & $0.0172$ &
 $ 0.0743 $
& $0.2500$ & $0.5681$
 \\ \hline
$\hat{\q}_2$                                       & $0.5392$ &
$0.5384$
 & $0.6536$ & $0.8179$
 \\ \hline \hline
$\hat{\c}_3$                                      & $4.35  $  &
   $3.25$
& $ 3.03  $  & $ 3.90  $   \\ \hline
$\hat{\c}_2$                                       & $16.4$ &
  $12.6$
& $12.1$ & $21.0$
 \\ \hline\hline
 $\alpha$                                      & $\bl{\mathbf{124.8}}  $ & $ \bl{\mathbf{ 36.40 }} $ & $  \bl{\mathbf{12.29}} $ & $  \bl{\mathbf{ 4.698 }} $ \\ \hline\hline
 \end{tabular}
\label{tab:tab3lev2}
\end{table}

\begin{table}[h]
\caption{\emph{Negative} spherical  perceptron capacity $\alpha_c(\kappa)$ --- progression of $r$-sfl RDT mechanism}\vspace{.1in}
\centering
\def\arraystretch{1.2}
\begin{tabular}{||l||c|c|c|c||}\hline\hline
 \hspace{-0in}$\kappa$                                               & $\mathbf{-2.0}$ & $\mathbf{-1.5}$ & $\mathbf{-1}$        & $\mathbf{-0.5}$    \\ \hline\hline
$\alpha_c^{(1,f)}(\kappa)$                    & $  173.4 $  & $   43.77  $  & $ 13.27 $  & $   4.770 $
 \\ \hline \hline
$\alpha_c^{(2,p)}(\kappa)$                                      & $  126.2 $  & $   37.36  $  & $ 12.78 $  & $   4.770 $ \\  \hline
 $\alpha_c^{(2,f)}(\kappa)$                                      & $ 125.4  $ & $  36.57 $ & $   12.32 $ & $  4.701 $   \\ \hline\hline
 $\alpha_c^{(3,f)}(\kappa)$                                      & $ \bl{\mathbf{ 124.8 }} $ & $  \bl{\mathbf{36.40}} $ & $  \bl{\mathbf{ 12.29 }} $ & $ \bl{\mathbf{ 4.698 }}$   \\ \hline\hline
 \end{tabular}
\label{tab:tab3lev3}
\end{table}

The obtained results are also visualized in Figures \ref{fig:fig1} and \ref{fig:fig2}. In Figure \ref{fig:fig1} a small $\kappa$ range is shown resulting in not so large scaled capacities (of order of a few tens). In these regimes the differences between various levels of lifting are more pronounced. However, as the figure clearly shows, the convergence is rather remarkably fast. When capacities get larger the relative differences become even smaller. This is clear from Figure \ref{fig:fig2}, where one can not make much of a difference between say 2-spl RDT on the one side  and 2-sfl and 3-sfl RDT on the other side. In other words, in the large $\alpha_c(\kappa)$ regimes, the lifted curves are visually indistinguishable which reconfirms the fact that 2-spl RDT  results of \cite{StojnicGardSphNeg13} are up to the leading order terms optimal (this was also shown in \cite{ZhouXiMon21}).
\begin{figure}[h]
\centering
\centerline{\includegraphics[width=1\linewidth]{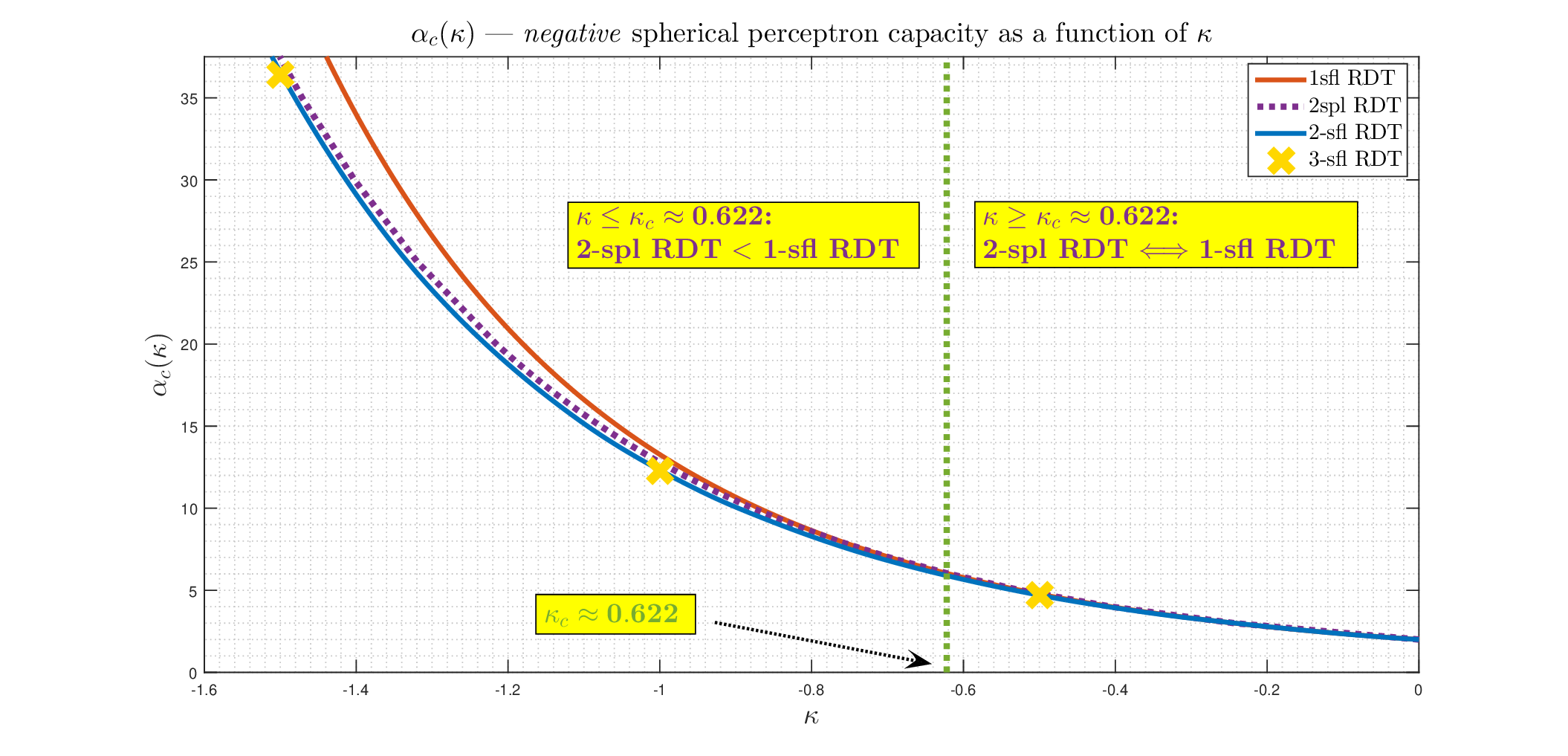}}
\caption{\emph{Negative} spherical perceptron capacity as a function of $\kappa$}
\label{fig:fig1}
\end{figure}

\begin{figure}[h]
\centering
\centerline{\includegraphics[width=1\linewidth]{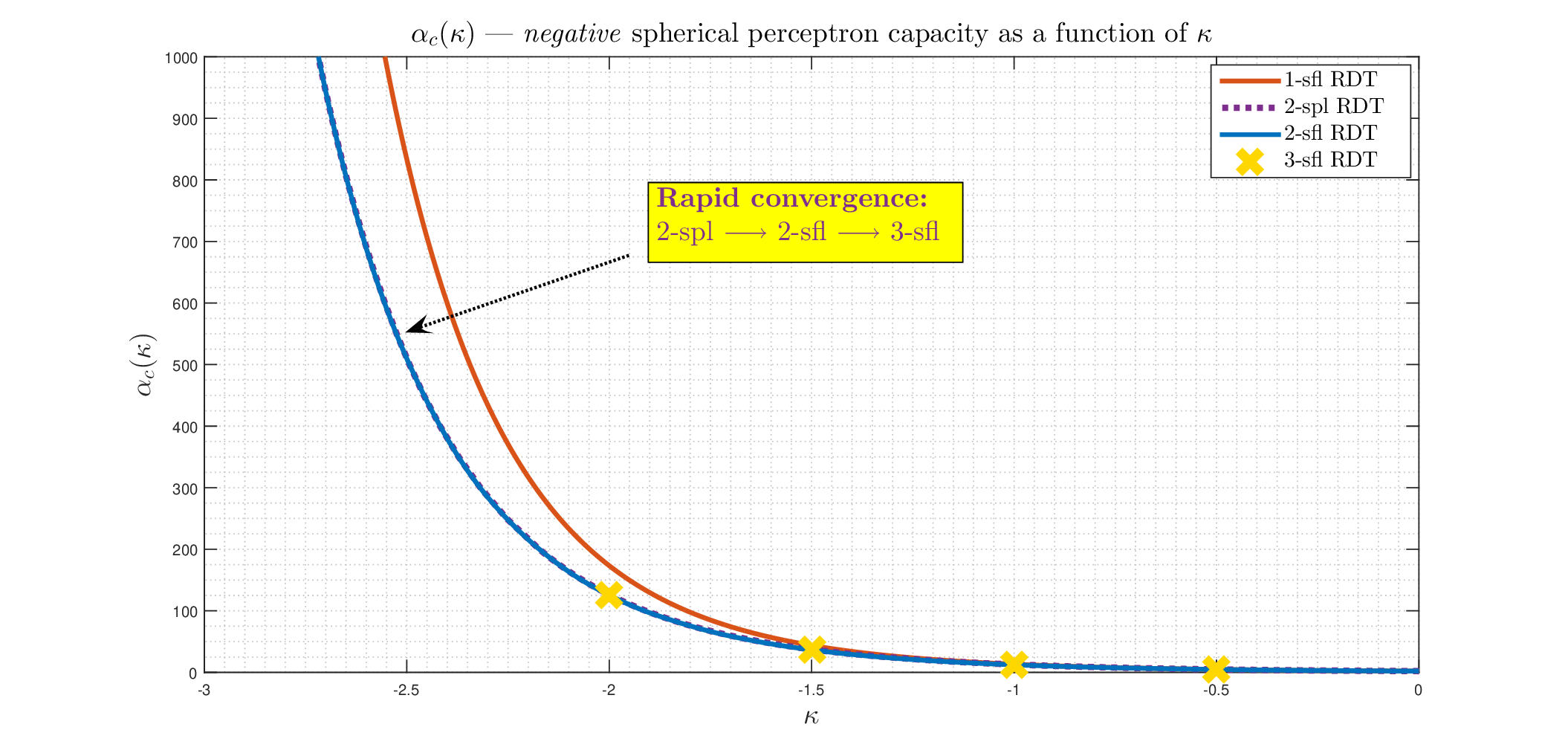}}
\caption{\emph{Negative} spherical perceptron capacity as a function of $\kappa$; larger $\alpha_c(\kappa)$}
\label{fig:fig2}
\end{figure}

\subsubsection{Modulo-$\m$ sfl RDT}
\label{sec:posmodm}

Everything presented above can be repeated relying on the so-called modulo-$m$ sfl RDT frame of \cite{Stojnicsflgscompyx23}. Instead of Theorem \ref{thme:negthmprac1}, one then basically has the following theorem.
 \begin{theorem}
  \label{thme:negthmprac2}
  Assume the setup of Theorem \ref{thme:negthmprac1} and instead of the complete, assume the modulo-$\m$ sfl RDT setup of \cite{Stojnicsflgscompyx23}.
   Let the ``fixed'' parts of $\hat{\p}$, $\hat{\q}$, and $\hat{\c}$ satisfy $\hat{\p}_1\rightarrow 1$, $\hat{\q}_1\rightarrow 1$, $\hat{\c}_1\rightarrow 1$, $\hat{\p}_{r+1}=\hat{\q}_{r+1}=\hat{\c}_{r+1}=0$, and let the ``non-fixed'' parts of $\hat{\p}_k$, and $\hat{\q}_k$ ($k\in\{2,3,\dots,r\}$) be the solutions of the following system of equations
  \begin{eqnarray}\label{eq:negthmprac2eq1}
   \frac{d \bar{\psi}_{rd}(\p,\q,\c,\gamma_{sq},\gamma_{sq}^{(p)},1,1,-1)}{d\p} =  0 \nonumber \\
   \frac{d \bar{\psi}_{rd}(\p,\q,\c,\gamma_{sq},\gamma_{sq}^{(p)},1,1,-1)}{d\q} =  0 \nonumber \\
    \frac{d \bar{\psi}_{rd}(\p,\q,\c,\gamma_{sq},\gamma_{sq}^{(p)},1,1,-1)}{d\gamma_{sq}} =  0\nonumber \\
    \frac{d \bar{\psi}_{rd}(\p,\q,\c,\gamma_{sq},\gamma_{sq}^{(p)},1,1,-1)}{d\gamma_{sq}^{(p)}} =  0.
 \end{eqnarray}
 Consequently, let
\begin{eqnarray}\label{eq:negthmprac2eq2}
c_k(\hat{\p},\hat{\q})  & = & \sqrt{\hat{\q}_{k-1}-\hat{\q}_k} \nonumber \\
b_k(\hat{\p},\hat{\q})  & = & \sqrt{\hat{\p}_{k-1}-\hat{\p}_k}.
 \end{eqnarray}
 Then
 \begin{eqnarray}
-f_{sq}(\infty)
& \leq  &   \max_{\c} \frac{1}{2}    \sum_{k=2}^{r+1}\Bigg(\Bigg.
   \hat{\p}_{k-1}\hat{\q}_{k-1}
   -\hat{\p}_{k}\hat{\q}_{k}
  \Bigg.\Bigg)
\c_k \nonumber \\
&&
  -\hat{\gamma}_{sq}^{(p)}- \varphi(D_1^{(per)}(c_k(\hat{\p},\hat{\q})),\c) + \hat{\gamma}_{sq} - \alpha\varphi(-D_1^{(sph)}(b_k(\hat{\p},\hat{\q})),\c)
  =-f_{sq,\m}(\infty).
  \label{eq:negthmprac2eq3}
\end{eqnarray}
\end{theorem}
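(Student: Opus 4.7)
The plan is to replicate the proof of Theorem \ref{thme:negthmprac1} verbatim, substituting the complete sfl RDT machinery by its modulo-$\m$ analogue developed in \cite{Stojnicsflgscompyx23,Stojnicnflgscompyx23,Stojnicflrdt23}. The essential difference between the two frames is that the complete framework imposes stationarity in $\c$ as part of the \emph{strong} random duality statement (\ref{eq:thmsflrdt2eq3a0}), whereas the modulo-$\m$ framework yields only \emph{weak} random duality (an inequality), leaving $\c$ as a free parameter over which one subsequently maximizes.

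First I would record the modulo-$\m$ analogues of Theorem \ref{thm:thmsflrdt1} and Corollary \ref{cor:cor1}. In those statements the equality in (\ref{eq:thmsflrdt2eq3a0}) is weakened to
\begin{equation*}
\lim_{n\rightarrow\infty} \frac{\mE_G \psi_{rp}}{\sqrt{n}} \leq \max_{\c} \lim_{n\rightarrow\infty} \psi_{rd}(\hat{\p},\hat{\q},\c,1,1,-1),
\end{equation*}
with $\hat{\p}$ and $\hat{\q}$ determined by only the $\p$- and $\q$-stationarity equations from (\ref{eq:thmsflrdt2eq2a0}); the $\c$-stationarity is dropped. The justification is exactly the modulo-$\m$ sfl interpolation bound of \cite{Stojnicsflgscompyx23}, which provides one-sided (upper) control on $\mE_G \psi_{rp}$ precisely when $\c$-stationarity is not invoked.

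With that tool in hand, I would follow Section \ref{sec:neg} line-by-line with $s=-1$, $f(\y)=\kappa\y^T\1$, $\cX=\mS^n$, $\cY=\mS_+^m$, $x=y=1$, and perform the two square-root-trick decouplings (\ref{eq:prac4a00})-(\ref{eq:prac5}) and (\ref{eq:prac7})-(\ref{eq:prac10}) introducing the scalar auxiliary parameters $\gamma_{sq}^{(p)}$ and $\gamma_{sq}$. Since each of these arises from an inner one-dimensional minimization with a bounded, convex-in-$\gamma$ objective, its stationarity (the last two equations in (\ref{eq:negthmprac2eq1})) can be attained inside the expectations without altering the direction of the weak-duality inequality. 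Substituting the resulting $\bar{\psi}_{rd}$ from (\ref{eq:negprac13}) into the modulo-$\m$ weak-duality bound then produces exactly (\ref{eq:negthmprac2eq3}), with the outer $\max_{\c}$ retained since $\c$-stationarity was never imposed.

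The main point to be careful about is not the algebra (which is identical to the proof of Theorem \ref{thme:negthmprac1}) but the order-of-operations issue when the two inner minimizations in $\gamma_{sq},\gamma_{sq}^{(p)}$ are interchanged with the outer $\max_{\c}$ and with $\lim_{n\to\infty}\mE_G$; this is handled by the standard saddle-point observation used in \cite{Stojnicbinperflrdt23,Stojnichopflrdt23}, relying on the one-dimensional convex nature of the $\gamma$-subproblems on their natural positive-orthant domains. The remaining bookkeeping then carries over unchanged from Section \ref{sec:neg}, completing the derivation of $-f_{sq,\m}(\infty)$ as an upper bound on $-f_{sq}(\infty)$.
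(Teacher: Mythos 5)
Your proposal is correct and takes essentially the same approach as the paper, which itself only cites the modulo-$\m$ sfl RDT machinery of \cite{Stojnicsflgscompyx23,Stojnicnflgscompyx23,Stojnicflrdt23} together with Theorems \ref{thm:thmsflrdt1} and \ref{thme:negthmprac1} and Corollary \ref{cor:cor1}; you simply spell out explicitly what that substitution entails (weak duality replacing strong duality, dropping the $\c$-stationarity, and retaining the outer $\max_{\c}$).
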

\begin{proof}
Follows from the previous discussion, Theorems \ref{thm:thmsflrdt1} and \ref{thme:negthmprac1}, Corollary \ref{cor:cor1}, and the sfl RDT machinery presented in \cite{Stojnicnflgscompyx23,Stojnicsflgscompyx23,Stojnicflrdt23}.
\end{proof}
We conducted the numerical evaluations using the modulo-$\m$ results of the above theorem without finding any scenario where the inequality in (\ref{eq:negthmprac2eq3}) is not tight. In other words, we have found that $f_{sq}^{(r)}(\infty)=f_{sq,\m}^{(r)}(\infty)$. This indicates that the \emph{stationarity} over $\c$ is actually of the \emph{maximization} type.

\section{Conclusion}
\label{sec:conc}

We studied the statistical capacity of the negative spherical  perceptrons (i.e., the classical spherical perceptron with \emph{negative} thresholds $\kappa$). Differently from their positive thresholds counterparts, these problems belong to the class of hard random structures where standard analytical approaches are powerless when it comes to approaching the \emph{exact} capacity characterizations. The random duality (RDT)  based results  \cite{StojnicGardGen13} provided solid generic upper bounds that were substantially improved via the \emph{partially} lifted RDT in \cite{StojnicGardSphNeg13}. A recent breakthroughs in studying bilinearly indexed random processes  \cite{Stojnicsflgscompyx23,Stojnicnflgscompyx23}, enabled \cite{Stojnicflrdt23} to create a \emph{fully lifted} random duality theory (fl RDT) counterpart to the RDT from \cite{StojnicCSetam09,StojnicICASSP10var,StojnicRegRndDlt10,StojnicGardGen13,StojnicICASSP09}.

After recognizing the connection between the statistical perceptrons, general \emph{random feasibility problems} (rfps), and the bilinearly indexed (bli), we utilized the fl RDT and its a particular \emph{stationarized} variant (called sfl RDT) to establish a general framework for studying the negative spherical perceptrons. The practical usability of the entire fl RDT machinery relies on a successful conducting of heavy underlying numerical evaluations. We first presented a large amount of analytical simplifications that resulted in uncovering remarkable closed form interconnections among the key lifting parameters. In addition to providing a direct view into the structure of the parametric relations, they also greatly helped with the numerical work. In particular, we obtained concrete numerical results and uncovered a remarkably rapid convergence of the whole fl RDT mechanism. Over a wide range of thresholds $\kappa$ (allowing \emph{scaled} capacities of a few thousands), we observed that the third (second non-trivial) level of stationarized full lifting suffices to achieve relative improvements no better than $\sim 0.1\%$. To ensure that the lifting progress is systematically presented and that the rapid convergence is clearly visible, we started with the very first level and then incrementally increased the level of lifting. Such a systematic procedure also allowed us to deduce as special cases the earlier results obtained through the \emph{plain} RDT in \cite{StojnicGardGen13} and the \emph{partial} RDT in \cite{StojnicGardSphNeg13}.

The methodology is very generic and  various extensions and generalizations are possible. These include many related to both general \emph{random feasibility problems} (rfps) and  particular random perceptrons. A lengthy list of random structures discussed in \cite{Stojnicnflgscompyx23,Stojnicsflgscompyx23,Stojnicflrdt23,Stojnicflrdt23,Stojnichopflrdt23} is an example of a collection of such problems
that can be handled through the methods presented here. As the technical details are problem specific,  we discuss them in separate papers.

As  \cite{Stojnicflrdt23,Stojnichopflrdt23} emphasized, the sfl RDT considerations do not require the standard Gaussianity  assumption of the random primals. The Lindeberg variant of the central limit theorem (see, e.g.,  \cite{Lindeberg22}) can be utilized to quickly extend the sfl RDT results to a wide range of different statistics.  \cite{Chatterjee06}'s utilization of the Lindenberg approach is, for example, particularly elegant.

\begin{singlespace}
\bibliographystyle{plain}
\bibliography{nflgscompyxRefs}
\end{singlespace}

\end{document}